\documentclass[journal,twoside,web]{ieeecolor}
\usepackage{generic}
\usepackage{cite}
\usepackage{amsmath,amssymb,amsfonts}
\usepackage{graphicx}
\usepackage{textcomp}
\usepackage{hyperref}

\usepackage{amsthm}
\usepackage{algorithm}
\usepackage{algpseudocode}

\usepackage{multirow}
\usepackage{array}
\usepackage{caption}
\usepackage{booktabs}
\usepackage{tabularx}
\usepackage{makecell}
\usepackage{subcaption}

\usepackage{rotating}

\makeatletter
\g@addto@macro\normalsize{%
  \setlength{\abovedisplayskip}{4pt plus 1pt minus 2pt}%
  \setlength{\belowdisplayskip}{4pt plus 1pt minus 2pt}%
  \setlength{\abovedisplayshortskip}{2pt plus 1pt minus 1pt}%
  \setlength{\belowdisplayshortskip}{3pt plus 1pt minus 1pt}%
}
\makeatother

\newcommand{\stat}[2]{%
  \makebox[2.7em][l]{#1}%
  \makebox[3.3em][l]{(#2)}%
}

\newcommand{\statbf}[2]{%
  \makebox[2.7em][l]{\textbf{#1}}%
  \makebox[3.3em][l]{\textbf{(#2)}}%
}

\algrenewcommand\algorithmicrequire{\textbf{Input:}}
\algrenewcommand\algorithmicensure{\textbf{Output:}}

\theoremstyle{definition}
\newtheorem{definition}{Definition}[section]

\theoremstyle{plain}
\newtheorem{lemma}[definition]{Lemma}
\newtheorem{theorem}[definition]{Theorem}

\theoremstyle{definition}
\newtheorem{remark}[definition]{Remark}

\def\BibTeX{{\rm B\kern-.05em{\sc i\kern-.025em b}\kern-.08em
    T\kern-.1667em\lower.7ex\hbox{E}\kern-.125emX}}
\begin{document}
\title{
From Real-Time Planning to Reliable Execution:\\Scalable Coordination for Heterogeneous Multi-Robot Fleets in Industrial Environments
}
% \author{First A. Author, \IEEEmembership{Fellow, IEEE}, Second B. Author, and Third C. Author, Jr., \IEEEmembership{Member, IEEE}
% % \thanks{This p }
% % \thanks{The }
% }
\author{Bo Cao
}

\maketitle

\begin{abstract}
With the increasing deployment of heterogeneous robot fleets in industrial environments, efficient coordination remains a critical challenge. Real-time path planning must simultaneously accommodate high robot densities and heterogeneous motion capabilities, while communication delays, execution uncertainties, and other disturbances may cause robots to deviate from the temporal assumptions underlying planned paths. Such deviations can lead to excessive waiting and congestion propagation across the fleet. This paper presents SCALE, a reactive online coordination framework that enables real-time planning while maintaining robust execution. Within this framework, we introduce a motion-induced conflict reduction mechanism to support the online generation of feasible paths for online conflict resolution. To mitigate the effects of disturbances, we further design a generalized Conjugate Action-Precedence Hypergraph (CAPH) that adaptively adjusts precedence relations among robots. Extensive validation experiments, together with a three-day deployment in a warehouse, demonstrate the practical feasibility and effectiveness of the proposed method.
\end{abstract}

\begin{IEEEkeywords}
Heterogeneous multi-robot systems, multi-robot path planning (MRPP), robust execution, planning and improving while executing
\end{IEEEkeywords}

\section{Introduction}
\label{sec:introduction}
\IEEEPARstart{I}{ntelligent} manufacturing systems and warehousing logistics are increasingly operated by unmanned robot fleets, including automated guided vehicles (AGVs) such as Kiva-like robots, autonomous forklifts, mobile manipulators, and large stacker robots. Compared with homogeneous mobile-robot teams, such fleets differ substantially in footprint and kinematic behavior. Coordinating them in dense workspaces is therefore not merely a path-planning problem: the system must generate feasible routes in real time, avoid collisions and deadlocks in limited space, and maintain robust execution despite communication latency, tracking errors, and uncertain waiting caused by humans or third-party facilities.

In today's industrial multi-robot deployments, coordination is commonly built on well-designed topological roadmaps or traffic zones. Collision avoidance and deadlock prevention are typically implemented by engineering-oriented methods such as zone control~\cite{zone-control}, Petri-net supervisors~\cite{petri-net}, glued-node reservation~\cite{glued-node}, or other handcrafted traffic rules. These methods are useful in specific scenarios because their safety logic is explicit and easy to certify. However, when dozens of tasks are involved, scheduling can take tens of seconds, which is not efficient enough for industrial systems. More importantly, for large-scale robot fleet, such strategies tend to be conservative: zones, guidepath segments, or reserved vertices often have to be allocated according to worst-case occupancy, reducing space utilization and degrading system efficiency in high-density scenarios~\cite{glued-node}. Their performance also strongly depends on manually designed layouts and rules, which severely limits their generality.

Recent advances in multi-agent path finding (MAPF) offer the possibility of more general planning. Representative algorithms, such as MAPF-LNS2~\cite{mapf-lns2}, LaCAM, and its improved variant LaCAM*~\cite{lacam, lacam_star}, have demonstrated strong scalability and solution quality in MAPF benchmarks. Nevertheless, most MAPF solvers rely on a graph-agent abstraction, in which each agent occupies a vertex or traverses an edge at a discrete time step. This creates a substantial gap when applied to real-world heterogeneous robots: the occupied space of a robot is determined not only by its vertex position, but also by its rigid-body footprint, orientation, and kinematic constraints. Although some approaches~\cite{large-agent-li,CCBS-1} partially relax the point-agent assumption, they still struggle to preserve the real-time capability and scalability of the original algorithms.
Moreover, in industrial environments, common disturbances, including occasional communication failures and human--robot interactions, can cause path execution to deviate from nominal discrete timing, compromising both efficiency and safety.
Robust MAPF methods reserve temporal margins~\cite{k-robut} or adjust precedence relations during execution~\cite{ADG,SADG}.
However, these methods still characterize inter-robot interactions primarily through pairwise vertex-occupancy conflicts or simplified footprint representations, limiting their applicability to heterogeneous robots.

To address the challenges above, we present \textbf{SCALE}, a \textbf{S}calable \textbf{C}oordination \textbf{A}rchitecture with \textbf{La}tency-resilient \textbf{E}xecution for large-scale heterogeneous robot fleet. It adopts an online reactive planning$\mbox{-}$execution architecture that groups potentially conflicting robots, generates locally feasible paths, and releases safe path segments through an execution authorization module under external disturbances.
The main contributions of this work are as follows: \textbf{1)} a generalized coordination framework for large-scale heterogeneous robot fleet; \textbf{2)} a motion-induced conflict reduction mechanism that enables real-time, kinematically feasible multi-robot path planning (MRPP); \textbf{3)} a generalized structure called Conjugate Action-Precedence Hypergraph (CAPH) for robust and latency-resilient asynchronous execution; and \textbf{4)} extensive simulations and real-world deployments in industrial scenarios with AGVs and forklifts, demonstrating sustained collision- and deadlock-free operation and a more than 30\% improvement in throughput.
Compared with traditional scheduling methods that separate route planning from execution control, SCALE models fleet-level path planning and asynchronous cooperative execution as an integrated online process, enabling scalable planning and improving while executing under unavoidable industrial uncertainties.

\begin{figure}[htbp]
    \centering
    \includegraphics[width=0.95\linewidth]{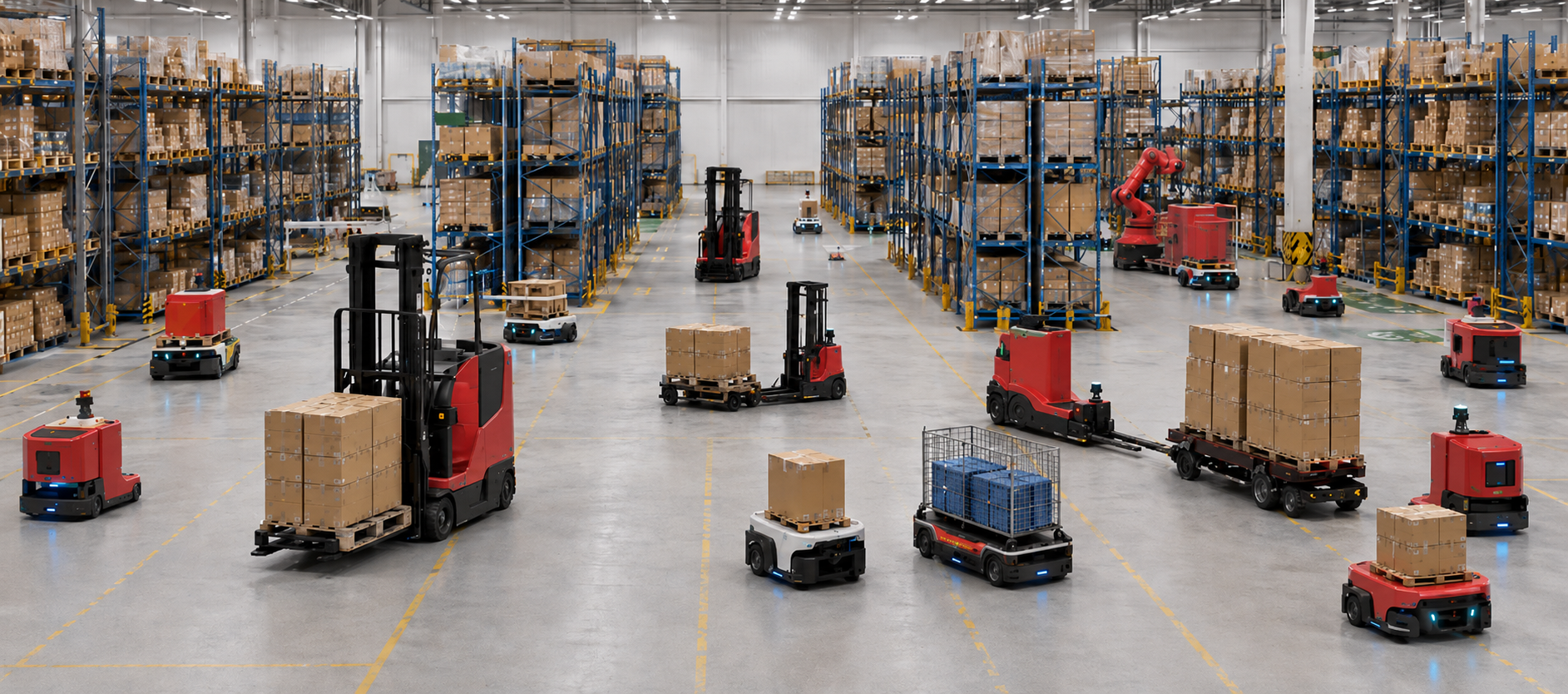}
    \caption{A typical industrial warehouse involving heterogeneous robot fleet.}
    \label{fig:scen}
\end{figure}

\section{Related Work}
\subsection{Scheduling of Industrial Multi-Robot Fleets}
Multi-robot scheduling has been extensively studied. Existing industrial approaches often rely on handcrafted heuristics tailored to specific scenarios. For example, \cite{pure-rule-agvs} adopts purely rule-based scheduling, while \cite{petri-net,zone-control} require predefined constraints under particular road-network structures. These methods can be efficient but are limited in generality. \cite{glued-node} provides a unified representation of inter-robot conflicts, and \cite{CAEH} proposes a more general unified scheduling framework; however, both lack scalability. Recently, MAPF-based methods have shown strong potential for coordination of large-scale robot fleet. \cite{well-formed,RHCR} enable real-time planning for hundreds of physical robots, but they typically simplify robot dynamics and may require structured roadmaps or well-formed conditions to ensure feasible planning. In contrast, the paper proposes a general and scalable lifelong scheduling framework for heterogeneous robot fleet.

\subsection{Multi-Robot Path Planning}
Multi-robot path planning provides robot fleet with a degree of general planning capability. Recent progress in MAPF under the point-agent assumption has substantially improved the scalability of planning algorithms. For example, \cite{mapf-lns2} employs large neighborhood search to enhance the scalability of classical search-based methods such as \cite{ECBS,pbs,CBS}. Another line of work, including \cite{pibt,lacam,lacam_star}, seeks feasible solutions through priority inheritance among agents, enabling planning for tens of thousands of agents within seconds. Several studies, such as \cite{large-agent-li,hetero-mapf,2022CBS-dynamics}, have attempted to incorporate more realistic assumptions into discrete planning, while~\cite{db-lacam} extends LaCAM to continuous motion-space search. However, they cannot perform real-time planning, which is unacceptable in industrial applications. Some of them also adopt overly idealized robot models, such as representing robots of different sizes as disks, leading to a substantial gap between their assumptions and real-world physical constraints.

\subsection{Robust Execution of Planned Paths}
For robust collaborative path execution, some approaches reserve temporal slack during planning so that bounded delays can be safely accommodated, as in $k$-robust and $p$-robust MAPF~\cite{k-robut,p-robut}. In~\cite{STN-ma}, stable execution of multi-robot plans is achieved by optimizing robot velocities.These formulations grow increasingly conservative as the delay margin increases and are generally inadequate when the system occasionally experiences extreme crashes or large delays.
Other works focus on precedence relations among actions of robots. For example, ADG~\cite{ADG} enforces a fixed ordering of execution admission, while time-independent planning works on a globally synchronized clock~\cite{TTP}. More recently, SADG~\cite{SADG} extends ADG with switchable dependencies, adjusting the execution order to reduce unnecessary waiting. In addition, methods such as~\cite{pie_d} couple planning and execution, where a committed path segment is executed while the remaining plan is improved online. However, these methods still primarily reason over pairwise topological dependencies or simplified vehicle models. For heterogeneous robots, a more realistic and efficient method is required to handle unavoidable disturbances.

% ==================================================================================
\section{Coordinating Heterogeneous Robot Fleets}
\label{sec:problem_formulation}

We formulate the coordination of heterogeneous robot fleet from two coupled perspectives.
The first is \emph{fleet-level path planning}, which computes a kinematically feasible path for each robot. The second is \emph{robust path execution}, which regulates the
release of the planned paths through action-level precedence constraints. 
Let the shared coordination space be represented by an undirected graph $G=(V,E)$, where
$V$ is a finite set of vertices and $E\subseteq V\times V$ is the set of topological
adjacencies. The robot fleet is denoted by $\mathcal{R}=\{r_1,r_2,\ldots,r_N\}$. Each robot
$i\in\mathcal{R}$ has a type $\tau_i\in\mathcal{T}$, which determines physical properties like its footprint and turning characteristics. For robot $i$, a path is defined as
\[
    p_i=
    \bigl((v_i^1,\rho_i^1),(v_i^2,\rho_i^2),\ldots,(v_i^{L_i},\rho_i^{L_i})\bigr)
    \in (V\times\mathbb{N})^{L_i},
\]
where $v_i^k\in V$ is the $k$-th vertex and $\rho_i^k\in\mathbb{N}$ is its nominal
precedence rank. A smaller rank indicates earlier execution. Unlike the discrete
time index in step MAPF, $\rho_i^k$ is not a discrete timestamp but an ordering over vertices.
Every consecutive vertex pair for a robot corresponds to an edge transition: $\{v_i^k,v_i^{k+1}\}\in E$ for all $k=1,\ldots,L_i-1$. If $p_i$ satisfies the kinematics constraints of
robot $i$, it induces an action sequence
$\sigma_i=(a_i^1,a_i^2,\ldots,a_i^{L_i-1})$, where
$a_i^k=(i,k,v_i^k,v_i^{k+1},\gamma_i^k)$ denotes the continuous motion from $v_i^k$ to
$v_i^{k+1}$. Here $\gamma_i^k(\lambda)\rightarrow SE(2)$ denotes the trajectory corresponding to the motion, and $\lambda\in[0,1]$ is a normalized parameter. 
The fleet-level path plan is $\mathcal{P}:=\{p_i\}_{i\in\mathcal{R}}$, and the induced action set is $\mathcal{A}(\mathcal{P}):=\bigcup_{i\in\mathcal{R}}\{a_i^k\mid k=1,\ldots,L_i-1\}$.
% For a robot of type $\tau_i$, the trajectory must belong to the admissible
% motion set $\Gamma_{\tau_i}(v_i^k,v_i^{k+1})$, which contains all type-feasible trajectories connecting $v_i^k$ to $v_i^{k+1}$ under the robot's kinematic constraints.

\subsection{Heterogeneous Multi-Robot Path Planning}
\label{subsec:heterogeneous_mrpp}

Given start vertices $\mathbf{v}^{\mathrm{s}}=(v_1^{\mathrm{s}},\ldots,v_N^{\mathrm{s}})$
and goal vertices $\mathbf{v}^{\mathrm{g}}=(v_1^{\mathrm{g}},\ldots,v_N^{\mathrm{g}})$,
the heterogeneous multi-robot path planning problem is to find a plan
$\mathcal{P}=\{p_i\}_{i\in\mathcal{R}}$ such that
$v_i^1=v_i^{\mathrm{s}}$ and $v_i^{L_i}=v_i^{\mathrm{g}}$ without any conflicts. Each path must be rank-consistent and feasible under the conflict model defined below.

Classical MAPF considers two topological conflicts: vertex conflicts and swap conflicts.
Following this convention, for two actions $a_i^k$ and $a_j^\ell$ with $i\neq j$, we set
$\operatorname{TC}(a_i^k,a_j^\ell)=1$ if either
$v_i^{k+1}=v_j^{\ell+1}$, which denotes two actions entering the same vertex, or
$v_i^k=v_j^{\ell+1}$ and $v_i^{k+1}=v_j^\ell$, which denotes two actions traversing the
same edge in opposite directions. Obviously, topological conflicts are insufficient for robots. Let $\mathcal{B}_{\tau_i}\subset\mathbb{R}^2$ denote the body-frame footprint of robot type
$\tau_i$. For action $a_i^k$, let
$\gamma_i^k(\lambda)=(x_i^k(\lambda),y_i^k(\lambda),\theta_i^k(\lambda))$ be its
continuous pose, and let $R(\theta)$ be the planar rotation matrix. The swept
region of $a_i^k$ is defined as
\begin{equation}
    \mathcal{W}_{\epsilon}(a_i^k)
    :=
    \bigcup_{\lambda\in[0,1]}
    \left(
        R(\theta_i^k(\lambda))\mathcal{B}_{\tau_i}
        +
        \begin{bmatrix}
            x_i^k(\lambda)\\
            y_i^k(\lambda)
        \end{bmatrix}
    \right)
    \oplus
    \mathbb{B}_{\epsilon},
    \label{eq:swept_region}
\end{equation}
where $\oplus$ denotes the Minkowski sum and
$\mathbb{B}_{\epsilon}=\{\pi \in\mathbb{R}^2\mid \|\pi\|_2\leq\epsilon\}$ is a
safety margin with radius $\epsilon\geq0$. A motion-induced conflict is defined as
$
    \operatorname{MC}(a_i^k,a_j^\ell)=1
    \iff
    \mathcal{W}_{\epsilon}(a_i^k)\cap\mathcal{W}_{\epsilon}(a_j^\ell)\neq\emptyset,
    i\neq j.
$
This definition captures collisions caused by robots with different footprints and nonholonomic constraints. The overall conflict model is
\(\operatorname{C}(a_i^k,a_j^\ell)=1\) if
\(\operatorname{MC}(a_i^k,a_j^\ell)=1\) or
\(\operatorname{TC}(a_i^k,a_j^\ell)=1\).
% We define the overall conflict indicator between two actions as
% $\operatorname{C}(a_i^k,a_j^\ell)=1$ if
% $\operatorname{TC}(a_i^k,a_j^\ell)=1$ or
% $\operatorname{MC}(a_i^k,a_j^\ell)=1$. The objective of the execution layer is then to
% assign precedence relations to all conflicting action pairs so that no conflicting actions
% are released simultaneously.

\subsection{Hybrid Action-Precedence Graph for MRPP}
\label{ADG}
Due to asynchronous execution and external disturbances in practice, the planned paths cannot always be executed in strict accordance with the discrete rank ordering.
Drawing on the action-dependency in ADG~\cite{ADG}, we introduce a generalized representation, termed Hybrid Action-Precedence Graph (HAPG). It defines action-level dependencies based on physical interactions among robots rather than vertex occupancy.

\begin{definition}[Hybrid Action-Precedence Graph]
\label{def:HAPG}
Given a set of paths $\mathcal{P}$ and its induced action set
$\mathcal{A}$, the HAPG is a directed graph
$\mathcal{G}_a=(\mathcal{A},\mathcal{E})$, where
$\mathcal{E}\subseteq\mathcal{A}\times\mathcal{A}$. A directed edge
$(b,c)\in\mathcal{E}$ indicates that $c$ can be executed only after $b$ has been completed.
There are two kinds of edges:
$\mathcal{E}=\mathcal{E}_{\mathrm{self}}\cup\mathcal{E}_{\mathrm{inter}}$, where
\[
\begin{aligned}
\mathcal{E}_{\mathrm{self}}
&:=
\{(a_i^k,a_i^{k+1})\mid i\in\mathcal{R},\ k=1,\ldots,L_i-2\}, \\
\mathcal{E}_{\mathrm{inter}}
&:=
\{(a_i^k,a_j^\ell)\mid i\neq j,\ 
\operatorname{C}(a_i^k,a_j^\ell)=1,\ 
a_i^k\prec_{\rho}a_j^\ell\}.
\end{aligned}
\]
Here $\mathcal{E}_{\mathrm{self}}$ preserves the path ordering of each robot,
whereas $\mathcal{E}_{\mathrm{inter}}$ assigns precedence relations to conflicting actions. Specifically,
$a_i^k\prec_{\rho}a_j^\ell$ if $\rho_i^k\leq\rho_j^\ell \wedge \operatorname{C}(a_i^k,a_j^\ell)=1$. See Algorithm~\ref{alg:hapg_construction} for details.
\end{definition}

Let
\(\{\mathrm{staged},\mathrm{in\mbox{-}progress},\mathrm{completed}\}\)
be the state set. The action state is initialized as \(\xi(a)\!=\!\mathrm{staged}\).
For \(a\), define
\(\operatorname{Pred}(a)\!=\!\{a'\!\in\!\mathcal{A}\mid(a',a)\!\in\!\mathcal{E}\}\).
The action can enter \(\mathrm{in\mbox{-}progress}\) only when every
\(a'\!\in\!\operatorname{Pred}(a)\) satisfies
\(\xi(a')\!=\!\mathrm{completed}\), and turns to \(\mathrm{completed}\)
after the action is finished.

\begin{algorithm}[htbp]
\caption{Hybrid Action-Precedence Graph Construction}
\label{alg:hapg_construction}
\begin{algorithmic}[1]
\Require Fleet-level path plan $\mathcal{P}=\{p_i\}_{i\in\mathcal{R}}$
\Ensure Hybrid action-precedence graph $\mathcal{G}_a=(\mathcal{A},\mathcal{E})$

\State $\mathcal{A}\gets\emptyset$, $\mathcal{E}\gets\emptyset$

\ForAll{$i\in\mathcal{R}$}
    \For{$k=1$ \textbf{to} $L_i-1$}
        \State $a_i^k=(i,k,v_i^k,v_i^{k+1},\gamma_i^k)$
        \State $\mathcal{A}\gets\mathcal{A}\cup\{a_i^k\}$
        \If{$k>1$}
            \State $\mathcal{E}\gets\mathcal{E}\cup\{(a_i^{k-1},a_i^k)\}$
        \EndIf
    \EndFor
\EndFor

\ForAll{$a_i^k,a_j^\ell\in\mathcal{A}$ with $i<j$}
    \If{$\operatorname{C}(a_i^k,a_j^\ell)=1$}
        \State  $\mathcal{E}$ $\leftarrow$ $(a_i^k,a_j^\ell)$  if
        $a_i^k\prec_{\rho}a_j^\ell$; otherwise $\mathcal{E}$ $\leftarrow$ $(a_j^\ell,a_i^k)$
    \EndIf
\EndFor

% \State $\mathcal{G}_a\gets(\mathcal{A},\mathcal{E})$
% \State \Return $\mathcal{G}_a$
\end{algorithmic}
\end{algorithm}

\begin{lemma}
\label{lem:hapg-safety-deadlock}
Let \(\mathcal P\) be a feasible rank-consistent MRPP plan. If each action is
admitted only after all its HAPG predecessors are completed, then the HAPG
execution is collision-free and deadlock-free.
\end{lemma}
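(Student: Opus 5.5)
We prove the two properties separately. Collision-freedom reduces to a pairwise argument on conflicting actions. Deadlock-freedom reduces to showing that the HAPG is acyclic. We rely on two consequences of ``feasible rank-consistent'': (i) along each path the ranks strictly increase, $\rho_i^k<\rho_i^{k+1}$; and (ii) no two distinct conflicting actions share a rank, i.e.\ for $i\neq j$, $\operatorname{C}(a_i^k,a_j^\ell)=1$ implies $\rho_i^k\neq\rho_j^\ell$. Property (ii) is the rank analogue of conflict-free timestamps in step MAPF. If ties can occur, we break them by the fixed robot index used in Algorithm~\ref{alg:hapg_construction} (the $i<j$ loop), which yields a strict total order $\prec$ on $\mathcal{A}$ keyed by $(\rho,i)$.

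\textbf{Collision-freedom.} Suppose two actions $b=a_i^k$ and $c=a_j^\ell$ with $i\neq j$ are simultaneously in-progress. We show that this cannot happen when $\operatorname{C}(b,c)=1$. By Algorithm~\ref{alg:hapg_construction}, every conflicting pair of actions receives exactly one directed edge, either $(b,c)$ or $(c,b)$. Say $(b,c)\in\mathcal{E}$. The admission rule then lets $c$ become in-progress only after $\xi(b)=\mathrm{completed}$, so the two actions never overlap in time. Hence any two concurrently executing actions satisfy $\operatorname{MC}=0$, which means their $\epsilon$-inflated swept regions $\mathcal{W}_\epsilon$ are disjoint. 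Since each robot's instantaneous body lies inside the swept region of its current action, the bodies are disjoint at every instant.

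Robots that are idle between actions also need care. An idle robot rests at the endpoint $v_i^{k+1}$ of its last completed action, and that footprint is contained in $\mathcal{W}_\epsilon(a_i^k)$. We will show that any later action $c$ of another robot intersecting this footprint either has an edge from $a_i^{k+1}$ or $\operatorname{TC}$ forces an ordering. We expect to need the mild assumption, implicit in feasibility, that a robot parked at its goal conflicts with no action of rank greater than its final rank.

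\textbf{Deadlock-freedom.} We show that every edge of $\mathcal{E}$ strictly increases the order $\prec$. For self edges this is (i). For inter edges it holds by construction: the edge goes from the $\prec_\rho$-smaller action to the larger one, with the index tie-break. Hence $\mathcal{G}_a$ is a DAG. At any moment, let $a$ be the $\prec$-minimal action that is not completed. All its predecessors are $\prec$-smaller, hence completed, so $a$ is admissible. Since each action finishes in finite time, induction on $|\mathcal{A}|$ shows that all actions complete.

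\textbf{Main obstacle.} The hardest step is the idle-robot and tie case in the collision argument: pinning down exactly what ``rank-consistent'' guarantees so that equal-rank conflicting actions and parked robots are covered. We will first try to derive (ii) directly from the feasibility of $\mathcal{P}$ under $\operatorname{C}$. If that fails, we will state the tie-break convention explicitly as part of the construction.
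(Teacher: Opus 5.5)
Your two main arguments are the paper's own. Every conflicting pair receives exactly one precedence edge, so the two actions are never in progress together. Every edge points forward in a strict rank-induced order, so the HAPG is acyclic. You are more careful than the paper on two points it compresses. First, read literally, the definition of $\mathcal E_{\mathrm{inter}}$ with $\rho_i^k\le\rho_j^\ell$ puts \emph{both} orientations on an equal-rank conflicting pair. So either your (ii) or the $(\rho,i)$ tie-break from the $i<j$ loop of Algorithm~\ref{alg:hapg_construction} is genuinely needed for the ``strict ordering'' the paper invokes. Second, your ``$\prec$-minimal incomplete action is admissible'' step is the progress argument hidden behind the paper's ``acyclic, hence deadlock-free.''

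The parked-robot paragraph goes beyond the paper, whose proof only separates concurrently executing actions and never considers a robot waiting at a vertex. As sketched, it does not work. Let $c=a_j^\ell$ be an action of robot $j\neq i$ whose swept region $\mathcal W_\epsilon(c)$ meets $i$'s $\epsilon$-inflated footprint at $v_i^{k+1}$. Your claim that $c$ ``has an edge from $a_i^{k+1}$ or TC forces an ordering'' fails exactly when $a_i^k\prec c\prec a_i^{k+1}$:
\begin{itemize}
\item Since $c$ conflicts with $a_i^k$ (the parked pose is $\gamma_i^k(1)$), the edge is $(a_i^k,c)$.
\item Because all edges point forward in $\prec$, there is no path from $a_i^{k+1}$ to $c$.
\item If $c$ also conflicts with $a_i^{k+1}$ (parked pose $\gamma_i^{k+1}(0)$), the edge $(c,a_i^{k+1})$ forces $c$ to run entirely while robot $i$ sits at $v_i^{k+1}$.
\end{itemize}
TC does not rescue this: it only adds pairs to $\operatorname{C}$ and never chooses an orientation.

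What you need is a no-sandwich condition built into feasibility. For each robot $i$ and each $k=0,\ldots,L_i-1$, no action $c$ of another robot meeting $i$'s inflated parked footprint at $v_i^{k+1}$ may satisfy $a_i^k\prec c\prec a_i^{k+1}$, where $a_i^0$ and $a_i^{L_i}$ are read as $-\infty$ and $+\infty$. This covers the start vertex, and your goal assumption is the case $k=L_i-1$, best phrased as ``$a_i^{L_i-1}\prec c$.''

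For $1\le k\le L_i-2$, this condition follows from strict (ii) if ranks advance by one per action: $\rho_j^\ell$ would have to equal $\rho_i^k$ or $\rho_i^{k+1}$, while $c$ conflicts with both actions. It does \emph{not} follow from the index tie-break, because $\rho_j^\ell=\rho_i^k$ with $j>i$ produces exactly the sandwich. So your fallback to the tie-break convention is enough for acyclicity but not for collision-freedom with parked robots. And since the paper's ranks are only an ordering, not timestamps, the condition has to be assumed as part of feasibility rather than derived from $\operatorname{C}$.
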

\begin{proof}
For any two inter-robot actions \(b\) and \(c\), \(\operatorname C(b,c)=0\) indicates the absence of collision. If
\(\operatorname C(b,c)=1\), HAPG includes an inter-robot precedence edge between
them: one action can start only after the other has completed, so the
conflicting swept regions never overlap in time. The self edges serialize the
actions of each robot. So the execution is collision-free.
Rank consistent paths induces a strict ordering among all actions. Therefore every edge in \(\mathcal E\) points from an earlier action to a later action in this order. As long as the path plan is feasible, no action can precede itself in the induced temporal ordering. Thus HAPG is acyclic, which means it is deadlock-free.
\end{proof}

\begin{figure*}[htbp]
    \centering
    \includegraphics[width=\linewidth]{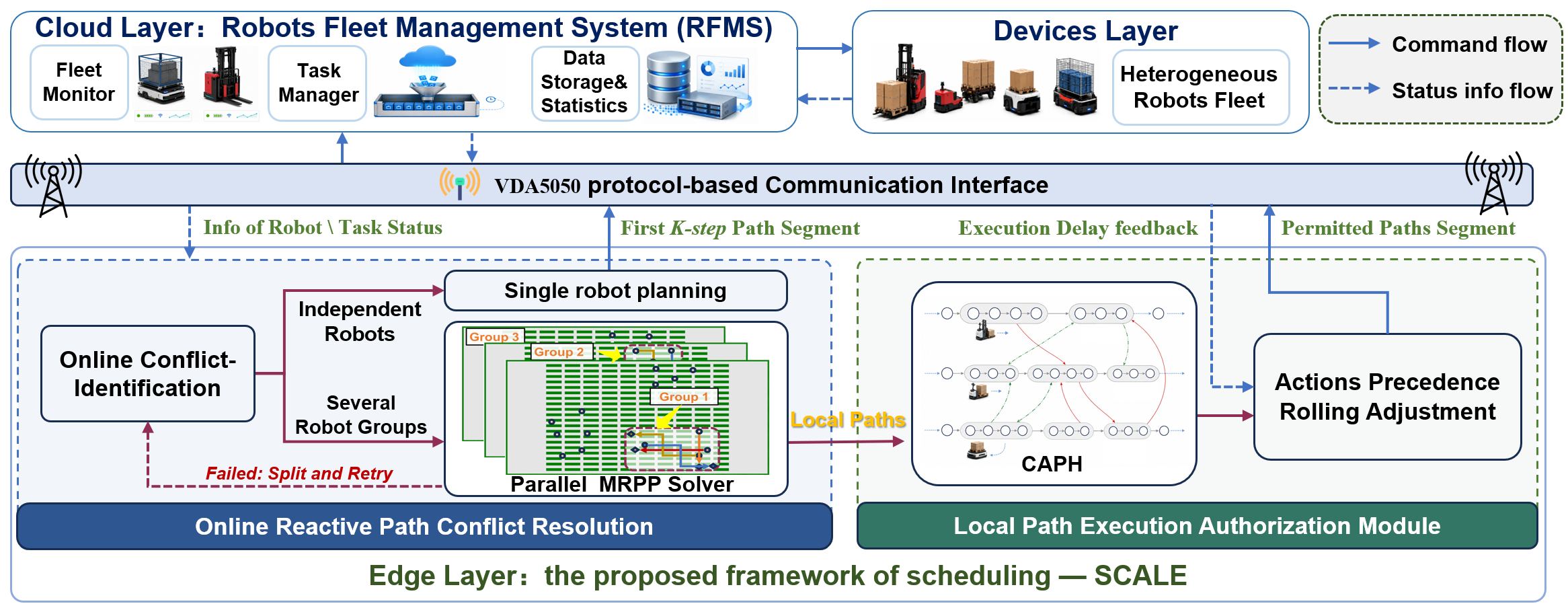}
    \caption{Overall coordination architecture of the proposed method.}
    \label{fig:framework}
\end{figure*}

\begin{figure}[htbp]
    \centering
    \includegraphics[width=\linewidth]{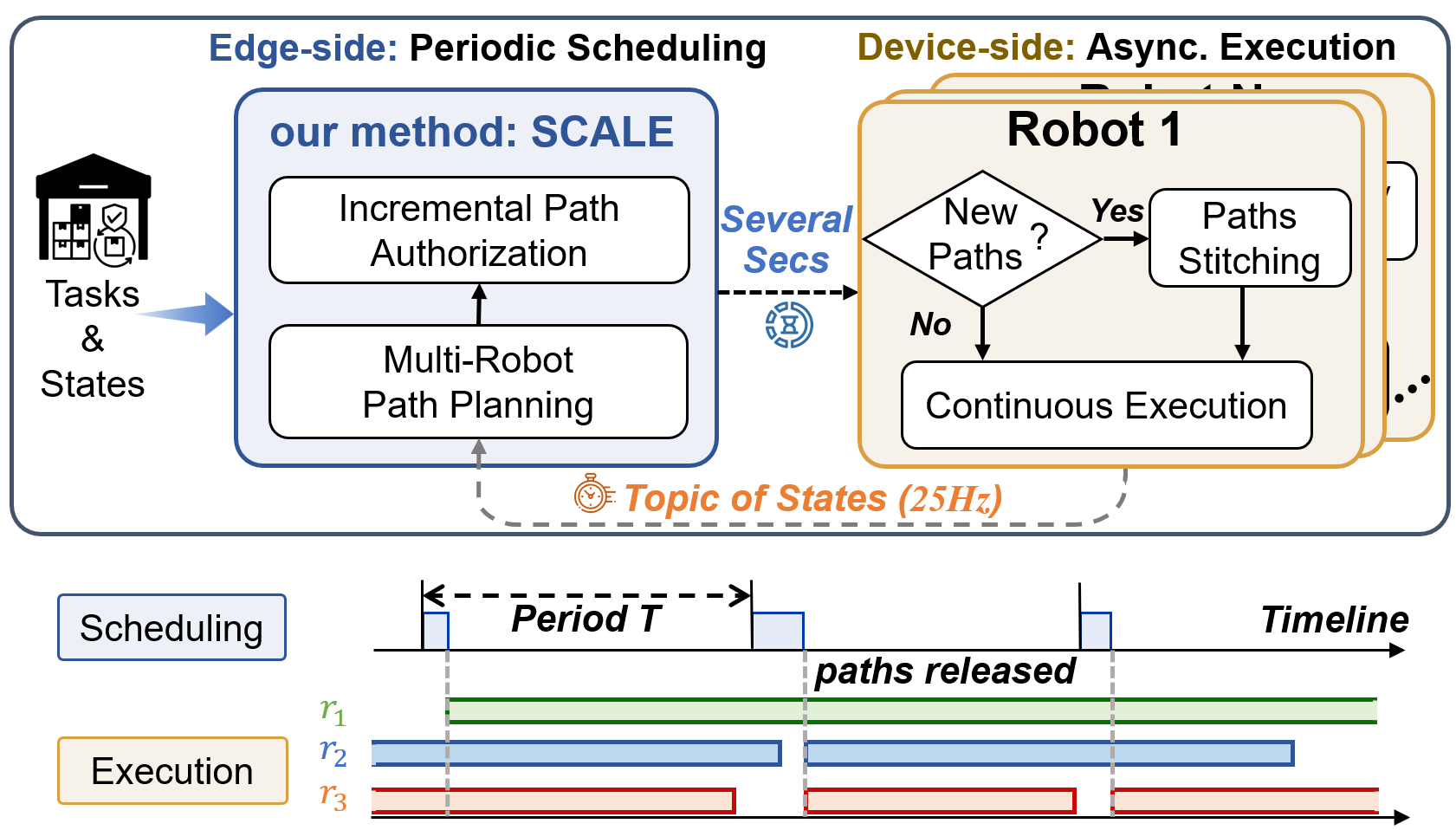}
    \caption{The scheduling process of industrial robot fleet. Top: scheduling and execution workflow. Bottom: an example of asynchronous execution of 3 robots over time, where the time consumption of scheduling has negligible effect on continuous execution.}
    \label{fig:flow_chart}
\end{figure}

\section{Main Approaches}
The overall coordination framework proposed is illustrated in Fig.~\ref{fig:framework}. It consists of two main modules: an online reactive path planning module and a resilience-guaranteed path execution module. The former generates feasible path sequences for each robot over a graph-based road network, while the latter determines the subsequent paths authorization locally.

In industrial robot fleet scheduling, paths generated by edge-side devices are not dispatched to robots as complete routes. Dynamic task arrivals, heterogeneous kinematic behaviors, and asynchronous execution make long-horizon path commitment difficult to control in practice. We therefore adopt a bounded path-release strategy: in each scheduling round, at most \(K\) steps of the planned path are committed to each robot, where \(K\) is an empirical commitment horizon. A released prefix is required to be executable to its end without collision or deadlock. For candidate local groups whose paths are bounded by this horizon, the execution authorization layer further uses CAPH in Sec.~\ref{subsec:caph} to release paths under action-precedence constraints. The next planning step starts from the terminal vertex of the last released path for each robot. This strategy, illustrated in Fig.~\ref{fig:flow_chart}, serves as the basis for efficient and controllable scheduling of a large-scale industrial robot fleet.

\subsection{Online Reactive Path Conflict Resolution}
\label{sec:path_planning}
% The planning layer therefore focuses on the unreleased suffixes of existing path plans. Before a new bounded prefix is committed, the planner identifies whether these suffixes contain local interactions that should be resolved collaboratively.
\subsubsection{Potential Conflict Robot Group Identification}
The online planner does not replan all robots jointly in every round. It first extracts robot groups whose unreleased paths may require repair to handle local path conflicts, while the remaining independent robots follow single-robot planning. Let \(\mathcal R_{\mathrm{idle}}\) denote idle robots and \(\mathcal R_{\mathrm{act}}=\mathcal R\setminus\mathcal R_{\mathrm{idle}}\) denote robots with active tasks. For each \(i\in\mathcal R_{\mathrm{act}}\), let \(\bar p_i\) be the unreleased segment of the path planned in the previous scheduling round. Only the first \(K\) actions induced by \(\bar p_i\) are considered, forming the horizon action set \(\mathcal A_{i,K}\); if \(\bar p_i\) is shorter than \(K\), its terminal vertex is treated as persistent occupation.
Thus, the conflict source is the path information that has already been planned but has not yet been released for execution. The horizon action sets inherit partial execution dependencies from the CAPH constructed in the previous scheduling process, too. To determine whether all the unreleased actions may block each other, we incrementally augment the dependency structure using the currently planned paths of all robots, rather than considering each tentative group in isolation, and then check whether a directed deadlock cycle is formed. Let \(\mathfrak C_{K}\) denote the set of detected cycles, as defined in Sec.~\ref{subsec:caph}. For a cycle \(Z\in\mathfrak C_{K}\), define the involved robot set as
\(
\mathcal R(Z):=\{\,i\in\mathcal R \mid \exists a_i^k\in Z\,\}.
\)
The initial groups are obtained by merging all overlapping cycle-induced sets:
\(
\mathbb G=\operatorname{Merge}\bigl(\{\mathcal R(Z)\mid Z\in\mathfrak C_{K}\}\bigr),
\)
where \(\operatorname{Merge}(\cdot)\) operates on robot sets: two cycle-induced sets are unioned whenever they share at least one robot, i.e., \(\mathcal R(Z_1)\cap\mathcal R(Z_2)\neq\emptyset\). Hence, multiple deadlock cycles coupled through the same robot are assigned to one local planning group. Let \(v_i^{\mathrm{cur}}\) be the current occupied roadmap vertex of robot \(i\), and let \(\bar a_i(v)\) denote its zero-motion occupation at vertex \(v\). The same horizon is then checked against idle robots. If an idle robot \(j\) satisfies \(\operatorname{C}(a_i^k,\bar a_j(v_j^{\mathrm{cur}}))=1\) for any \(a_i^k\in\mathcal A_{i,K}\), it is added to the group containing \(i\).

Once a group \(\mathcal S \in \mathbb G\) has been identified, local goals are assigned to guide the robots. Let \(\mathcal U_{\mathcal S}\) denote the vertices set containing only the path vertices of the cyclic actions in \(\mathcal S\):
\[
\begin{aligned}
\mathcal U_{\mathcal S}
:=
\bigcup_{\substack{
a_i^k(v_i^k,v_i^{k+1})\in Z, \quad Z\in\mathfrak C_{K}
}}
\{\,v_i^k,v_i^{k+1}\,\}.
\end{aligned}
\]
The local coordination region is obtained by breadth-first search on the roadmap graph:
\(
\mathcal V_{\mathcal S}
=
\operatorname{BFS}_{K'}\!\left(\mathcal U_{\mathcal S}\right),
\)
where \(K'\) is the depth of expansion.
For an active robot, the task goal is directly used when
\[
\Phi_i:=
\left[
v_i^{\mathrm g}\in\mathcal V_{\mathcal S}
\ \wedge\
\operatorname{C}(\bar a_i(v_i^{\mathrm g}),\bar a_j(v_j^{\mathrm{cur}}))=0,\
\forall j\in\mathcal S\setminus\{i\}
\right],
\]
otherwise the local goal is selected along its current single-robot path. Define two selection predicates:
\[
\begin{aligned}
\operatorname{Goal}_i(v)
&:=
\bigl[
v\notin\mathcal V_{\mathcal S}
\ \wedge\
\operatorname{C}(\bar a_i(v),\bar a_j(v_j^{\mathrm{cur}}))=0,\\
&\quad
\forall j\in\mathcal S\setminus\{i\}
\bigr],\\
\operatorname{Free}_i(v)
&:=
\bigl[
v\notin\mathcal V_{\mathcal S}
\ \wedge\
\operatorname{C}(\bar a_i(v),a_m^q)=0,\\
&\quad
\forall m\in\mathcal S\cap\mathcal R_{\mathrm{act}},\
\forall a_m^q\in\mathcal A_{m,K}
\bigr].
\end{aligned}
\]
Then the strategy for local goals assignment is
\[
\begin{aligned}
g_i^{\mathrm{loc}}
=
\begin{cases}
v_i^{\mathrm g},
& i\in\mathcal R_{\mathrm{act}}\ \mathrm{and}\ \Phi_i,\\[1mm]
\operatorname{First}_{p_i^{\mathrm{sr}}}\{v\in V\mid \operatorname{Goal}_i(v)\},
& i\in\mathcal R_{\mathrm{act}}\ \mathrm{and}\ \neg\Phi_i,\\[1mm]
\displaystyle
\underset{\substack{v\in V\\ \operatorname{Free}_i(v)}}{\arg\min}
\quad l(v_i^{\mathrm{cur}},v),
& i\in\mathcal R_{\mathrm{idle}}.
\end{cases}
\end{aligned}
\]
Here \(\operatorname{Goal}_i(v)\) indicates that vertex \(v\) is outside the cycle-expanded region and can be statically occupied by robot \(i\) without conflicting with the current positions of all other robots in the group, whereas \(\operatorname{Free}_i(v)\) indicates that an idle robot can occupy \(v\) without conflicting with the active horizon actions. In addition, \(p_i^{\mathrm{sr}}\) is the current single-robot path toward the task goal, \(\operatorname{First}_{p_i^{\mathrm{sr}}}(\cdot)\) returns the first vertex along this path satisfying the specified condition, and \(l(\cdot,\cdot)\) is the Manhattan distance on the roadmap. This assignment keeps active robots moving toward their task destinations while providing the local MRPP solver with local goals for conflict resolution; idle robots are assigned nearby avoidance vertices to avoid unnecessary blocking.

\subsubsection{Independent Robot Path Planning}
Robots that are not assigned to any group are planned independently. This single-robot planning step serves as a lightweight path planning and congestion-avoidance procedure for robots whose current decisions have no immediate influence on others.
Inspired by traffic-flow-based optimization~\cite{traffic-flow}, we construct a guidance graph from the current planned paths, so that independent planning can account for the near-future congestion. Since paths are released under action-precedence constraints, only the path segments corresponding to actions that are not yet $\mathrm{completed}$ should contribute to the guidance cost. Accordingly, for each robot path \(p_i\in\mathcal P_t\), let \(\widehat p_i^{\,t}\subseteq E\) denote the relevant set of edges. 
The flow count of \((\mu,\nu)\in E\) is:
\begin{equation}
    \varrho(\mu,\nu)
    =
    \sum_{p_i\in\mathcal P}
    \mathbf{1}\{(\mu,\nu)\in\widehat p_i\},
\end{equation}
which represents the number of paths that pass through \((\mu,\nu)\). The vertex-level congestion and contraflow weight are estimated as
\(
\kappa(\nu)=\sum_{\mu\in\mathcal N(\nu)} \varrho(\mu,\nu), \quad
\chi(\mu,\nu)=\varrho(\mu,\nu)\varrho(\nu,\mu).
\)
These quantities are incorporated into the edge cost:
\(
c(\mu,\nu)=\bigl(\chi(\mu,\nu),\, d(\mu,\nu)+\beta \kappa(\nu)\bigr),
\)
where \(d(\mu,\nu)\) denotes the edge length and \(\beta>0\) is a scaling coefficient. We thus construct the guided graph
\(
G^{\mathrm g}=(V,E,c),
\)
on which A*~\cite{a_star} is applied. In this way, severe opposite-direction traffic is avoided with highest priority, while usual congestion is penalized as a secondary criterion. The path planned for an independent robot is not necessarily the shortest in distance, but rather a low-congestion path.

\subsubsection{Search-Integrated Conflict Discretization for Multi-Robot Path Planning}
\label{sec:SICD}
The motion-induced conflict model in Sec.~\ref{sec:problem_formulation} captures the physical interaction between heterogeneous robots through their swept
footprints. For an MRPP algorithm, directly evaluating this continuous
model during searching would require repeated pairwise
polygon-overlap evaluations over candidate actions, which is
inefficient for real-time planning. We introduce Search-Integrated
Conflict Discretization (SICD), a reduction mechanism that embeds
continuous-motion conflict detection into the discrete search process.
Instead of invoking a geometric collision detector for every
robot pair, it associates each candidate action with a graph-level
occupancy set and uses the incremental nature of search to discard
infeasible joint decisions.

Consider a single searching round in which the planner evaluates a
transition from the current fleet state $D^t$ to a tentative successor
state $D^{t+1}$. For robot $i$, the corresponding continuous action is
$a_i^t=(i,d_i^t,d_i^{t+1},\gamma_i^t)$, where
$d_i^t,d_i^{t+1}\in V$. Let $X(v)\in\mathbb{R}^2$ denote the planar
coordinate of vertex $v$. SICD maps the swept footprint of $a_i^t$ to a
occupancy set:
\begin{equation}
    \mathcal{O}(a_i^t)
    =
    \left\{
    v\in\mathcal{N}_G(\gamma_i^t)
    \;\middle|\;
    X(v)\in\widehat{W}_{\epsilon}(a_i^t)
    \right\},
    \label{eq:sicd_occupancy}
\end{equation}
where $\mathcal{N}_G(\gamma_i^t)$ denotes the trajectory-wise roadmap
neighborhood pre-generated along $\gamma_i^t$, and
$\widehat{W}_{\epsilon}(a_i^t)$ is the sampled occupied region of the
action.
In each search round, the planner incrementally evaluates all candidate one-step movements and determines whether they can be incorporated into the same state transition. Then we embed the occupancy set
$\mathcal{O}(a_i^t)$ of each action into a bitmap
$\mathbf{b}(a_i^t)\in\{0,1\}^{|V|}$, where each entry is a binary
occupancy variable and the component associated with vertex $v$ is one
if and only if $v\in\mathcal{O}(a_i^t)$. Let $\mathbf{B}$ denote the accumulated bitmap obtained from all actions that have already been accepted in the current round. Then the newly considered action $a_i^t$ is compatible if and only if
\( \left(\mathbf{B}\;\&\;\mathbf{b}(a_i^t)\right)=\mathbf{0}. \)
Here $\&$ denotes bitwise AND. If the condition holds, the accumulated
occupancy is updated by the bitwise OR operation
$\mathbf{B}\leftarrow\mathbf{B}\vee\mathbf{b}(a_i^t)$; otherwise the
current partial joint decision is identified as conflicting. Since the
bitmap entries are binary variables, both the intersection and the
update can be implemented by efficient word-level bit operations.
Therefore, SICD converts repeated continuous sweeping collision detection into incremental bitmap set-intersection while preserving the
footprint-awareness of the underlying kinematic model. It should be noted that the mechanism is designed for collision detection on
the roadmap graph during planning, while complementary geometric collision avoidance during continuous execution is enforced by the execution layer, as discussed in Sec.~\ref{subsec:caph}. Moreover, SICD is not tied to a particular solver and can be integrated into any discrete-time search-based MRPP method. The specific details of the mechanism are shown in Algorithm~\ref{alg:sicd}.

\begin{algorithm}[htbp]
\caption{SICD collision detection for a state transition}
\label{alg:sicd}
\begin{algorithmic}[1]
\Require Current fleet state $D^t$, tentative successor state
         $D^{t+1}$
\Function{SICD}{$D^t,D^{t+1}$}
\State $\mathbf{B}\gets \mathbf{0}$ \Comment{occupied discretized footprint bitmap}
\ForAll{$i\in\mathcal{R}$}
    \If{$D^{t+1}_i$ is undefined}
        \State \textbf{continue}
    \EndIf
    \State $a_i^t\gets (i,D^t_i,D^{t+1}_i,\gamma_i^t)$
    \State $\mathbf{b}_i\gets \mathbf{b}(a_i^t)$
    \If{$(\mathbf{B}\;\&\;\mathbf{b}_i)\neq \mathbf{0}$}
        \State \Return \textbf{true} \Comment{collision detected}
    \EndIf
    \State $\mathbf{B}\gets \mathbf{B}\vee\mathbf{b}_i$
\EndFor
\State \Return \textbf{false} \Comment{collision free}
\EndFunction
\end{algorithmic}
\end{algorithm}

\textbf{Complexity Analysis.}
Let \(w_m\) be the machine word size. A single SICD query performs one
bitmap intersection over \(\lceil |V|/w_m\rceil\) words; if the action
is accepted, one bitmap union of the same size is applied. Therefore,
one transition detection costs \(O(|V|/w_m)\cdot O(1)\), where the \(O(1)\) is the complexity of bitmap-level operation.
For a collective transition with $N$ robots, the worst-case round complexity
is \(O(N|V|/w_m)\).
By comparison, a direct polygon-overlap (PO) scheme repeatedly evaluates
continuous geometry for robot pairs and has
$O(N^2Lc_{\mathrm{poly}})$ worst-case complexity per search round, where
$N$ is the number of robots, $L$ is the number of footprint samples along
an action, and $c_{\mathrm{poly}}$ denotes the cost of one
polygon-overlap predicate. Since $c_{\mathrm{poly}}$ involves
floating-point geometric predicates on polygonal footprints, frequent PO
calls introduce substantial and redundant computation during incremental
searching. SICD reduces this burden by replacing pairwise continuous
overlap detections with one bitmap query against the accumulated
occupancy for each candidate action.

\begin{remark}
When local MRPP fails to find a feasible solution for a group within the time limit, the group is split. The planner then reduces the commitment horizon \(K\) and relaxes the timeout threshold before retrying the resolution process. This fallback strategy allows the online conflict-resolution process to iteratively pursue a feasible resolution.
\end{remark}

\subsection{Conjugate Action-Precedence Hypergraph}
\label{subsec:caph}
% (Planning and Improving while Executing)
% HAPG provides action-level execution constraints for heterogeneous MRPP plans. However, MRPP plans typically ignore external disturbances which often occur in industrial environments.

To achieve latency-resilient path execution among heterogeneous robots, we introduce the Conjugate Action-Precedence Hypergraph (CAPH), which accommodates delays caused by external uncertainty. It preserves the original action-level dependencies of HAPG while lifting selected inter-robot dependencies to a capsule-level representation, termed the Conjugate Precedence Capsule (CPC). This representation reduces unnecessary waiting through dependency switching.

\subsubsection{CPC Construction}
We first introduce the action capsule, the basic component of CAPH:
\begin{definition}[Action Capsule] \label{def:AC}
For robot \(i\), let
\(
\sigma_i=(a_i^1,\ldots,a_i^{L_i-1})
\)
be its action sequence. An action capsule is a contiguous action segment
\(
Q_i^{p:q}:=\langle a_i^p,\ldots,a_i^q\rangle,
\quad 1\le p\le q\le L_i-1 .
\)
% The first and last actions of \(Q_i^{p:q}\) are denoted by
% \(
% h(Q_i^{p:q})=a_i^p, t(Q_i^{p:q})=a_i^q .
% \)
For each action \(a_i^k\), let \(\Delta_i^k\) denote
its nominal traversal duration, estimated from the length of the
corresponding trajectory and the speed of robot type \(\tau_i\). The entry and exit times of an action capsule are denoted by \(T_s(\cdot)\) and \(T_g(\cdot)\), respectively.
Then a capsule
\(Q_i^{k:k+\ell}\) is associated with an entry time
\(\hat T_s(Q_i^{k:k}) \ge \hat T_g(Q_i^{\sim :k-1})\) and an exit time \(\hat T_g(Q_i^{k:k+\ell}) \ge T_s(Q_i^{k:k}) + \sum_{n=k}^{k+\ell} \Delta_i^n\). An action capsule has the same states as an action: \(\{\mathrm{staged},\mathrm{in\mbox{-}progress}, \mathrm{completed}\}\).
\end{definition}

The CPC construction process is based on the initial dependencies in HAPG. Each action is first wrapped as a singleton action capsule:
\(
\mathcal Q_0=
\{Q_i^{k:k}\mid i\in\mathcal R,\ k=1,\ldots,L_i-1\}.
\)
Let \(\psi(a_i^k)=Q_i^{k:k}\). This mapping preserves the
original HAPG dependencies:
\(
(\psi(a),\psi(b))
\Longleftrightarrow
(a,b)\in \mathcal{E} 
\).
For any pair of robot paths \((p_i,p_j)\), we define the conflict
signature of action \(a_j^\ell\) with respect to \(p_i\) as
\(
\Gamma_{i\leftarrow j}(\ell)
=
\{\,k\in\{1,\ldots,L_i-1\}\mid \operatorname{C}(a_i^k,a_j^\ell)=1\,\},
\ell=1,\ldots,L_j-1 .
\)
This signature characterizes the local conflict context of \(a_j^\ell\)
with respect to the reference path \(p_i\).
A non-empty interval \([s,t]\) on \(p_j\) is called a maximal constant-signature interval if
\(
\Gamma_{i\leftarrow j}(s)
=
\Gamma_{i\leftarrow j}(s+1)
=
\cdots
=
\Gamma_{i\leftarrow j}(t)
=
\mathcal I\neq\emptyset
\). 
This interval induces the capsule
\(Q_j^{s:t}=\langle a_j^s,\ldots,a_j^t\rangle\). Its conjugate capsule on
\(p_i\) is defined as the minimum contiguous span covering the signature
\(\mathcal I\), namely \(Q_i^{p:q}\) with \(p=\min\mathcal I\) and
\(q=\max\mathcal I\). If the conflict signatures inherited from HAPG between the two capsules
do not form bidirectional dependencies, a candidate capsule pair is formed, denoted by
\(
\omega_{ij}^{\zeta}:=(Q_i^{p:q},Q_j^{s:t}).
\)
where \(\zeta\) indexes the capsule pair.

\begin{lemma}
\label{lem:capsule-conflict-closure}
For a candidate capsule pair
\(\omega_{ij}^{\zeta}=(Q_i^{p:q},Q_j^{s:t})\) constructed from the mutual conflict
signature, all conflicts between \(Q_j^{s:t}\) and path \(p_i\) are contained in
\(Q_i^{p:q}\), i.e.,
\[
\operatorname{C}(a_i^k,a_j^\ell)=0,\quad
\forall \ell\in[s,t],\ k\notin[p,q].
\]
Therefore, after robot \(i\) exits \(Q_i^{p:q}\), its subsequent actions do not
conflict with \(Q_j^{s:t}\). The same property holds symmetrically for robot
\(j\).
\end{lemma}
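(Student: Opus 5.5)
The lemma follows directly from how the capsule pair is built, so I would argue straight from the definitions of the conflict signature and the maximal constant-signature interval. Fix \(\ell\in[s,t]\). By construction, \(\Gamma_{i\leftarrow j}(\ell)=\mathcal I\) for every such \(\ell\). By definition of the signature, \(\operatorname C(a_i^k,a_j^\ell)=1\) holds exactly when \(k\in\Gamma_{i\leftarrow j}(\ell)=\mathcal I\).

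The conjugate capsule is \(Q_i^{p:q}\) with \(p=\min\mathcal I\) and \(q=\max\mathcal I\), so \(\mathcal I\subseteq[p,q]\). Hence any \(k\notin[p,q]\) satisfies \(k\notin\mathcal I\), and therefore \(\operatorname C(a_i^k,a_j^\ell)=0\). Since \(\ell\in[s,t]\) was arbitrary, the displayed identity holds.

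For the temporal consequence, I would note that once robot \(i\) exits \(Q_i^{p:q}\), the self edges of HAPG, which are preserved by \(\psi\), force all its subsequent actions to have indices \(k>q\). In particular these \(k\) lie outside \([p,q]\). The first part then shows that none of them conflicts with any action of \(Q_j^{s:t}\), so no precedence edge into or out of \(Q_j^{s:t}\) can arise from them.

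The symmetric statement for robot \(j\) is the only step needing care, and I expect it to be the main obstacle. The construction starts from an interval on \(p_j\), and a signature-based interval on \(p_j\) need not be the minimal span containing \(\bigcup_{k\in[p,q]}\Gamma_{j\leftarrow i}(k)\). I would therefore read "constructed from the mutual conflict signature" as applying the same procedure with the roles of \(i\) and \(j\) exchanged: \([p,q]\) is a constant-signature interval for \(\Gamma_{j\leftarrow i}\), and \([s,t]\) is the minimal span of its signature. Under that reading the argument above applies verbatim with the indices swapped.

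If instead only the one-sided construction is available, I would enlarge \([s,t]\) to the minimal span covering \(\bigcup_{k\in[p,q]}\Gamma_{j\leftarrow i}(k)\). The same containment argument then gives the symmetric claim, because conflict is symmetric, \(\operatorname C(a,b)=\operatorname C(b,a)\), as both \(\operatorname{MC}\) and \(\operatorname{TC}\) are symmetric relations.
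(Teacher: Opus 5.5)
Your argument for the displayed identity is the paper's own: fix \(\ell\in[s,t]\), use \(\Gamma_{i\leftarrow j}(\ell)=\mathcal I\subseteq[p,q]\), and conclude \(\operatorname{C}(a_i^k,a_j^\ell)=0\) for every \(k\notin[p,q]\). The paper then handles the symmetric half only with the phrase ``applying the same argument to the opposite conflict signature,'' which is exactly your first (mutual) reading. So your main line matches the paper's proof.

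Your suspicion about the symmetric half is justified. Under a purely one-sided construction it can fail: maximality only forces \(\Gamma_{i\leftarrow j}(t+1)\neq\mathcal I\), so some \(\ell\notin[s,t]\) can still have \(\Gamma_{i\leftarrow j}(\ell)\cap[p,q]\neq\emptyset\). The paper glosses over this.

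Your fallback, however, does not prove the lemma for the resulting pair. Enlarging \([s,t]\) once secures the symmetric containment, but the newly added indices \(\ell\) may conflict with actions \(a_i^k\) for \(k\notin[p,q]\), which breaks the first containment. To get both, you would have to alternate the two enlargements (of \([p,q]\) and of \([s,t]\)) until both spans are closed. This terminates because paths are finite, and at that point the constant-signature hypothesis is no longer needed.
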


\begin{proof}
For any \(\ell\in[s,t]\), the maximal constant conflict signature gives
\(\Gamma_{i\leftarrow j}(\ell)=\mathcal I\). By definition,
\(k\in\mathcal I\) if and only if
\(\operatorname{C}(a_i^k,a_j^\ell)=1\). Since \(Q_i^{p:q}\) is the minimum
contiguous span covering \(\mathcal I\), any \(k\notin[p,q]\) is not in
\(\mathcal I\), and thus cannot conflict with \(a_j^\ell\). Applying the same
argument to the opposite conflict signature proves the symmetric case.
\end{proof}

\begin{lemma}{\label{def:CPC}}
Given an action capsule pair \(\omega_{ij}^{\zeta}=(Q_i^{p:q},\) \(Q_j^{s:t})\), it is a CPC if at least one capsule is
non-singleton, i.e.,
\(q-p+1\ge 2\quad \text{or} \quad t-s+1\ge 2\), and the binary orientation \(z_\omega\in\{0,1\}\) satisfies the temporal constraints in Definition~\ref{def:AC} whenever switching.
% The initial orientation of \(\omega\) is inherited from the entry time. 
Specifically, if the two capsules satisfy
\(
\hat T_g(Q_i^{p:q})\le \hat T_s(Q_j^{s:t}),
\)
then the relation is \(Q_i^{p:q}\prec Q_j^{s:t}\),
and vice versa. The initial CPC orientation is consistent with the physical actions but is constructed at the capsule level.
% The initial orientation value \(z_\omega\) is assigned according to the temporal
% order induced by the initial HAPG.
\end{lemma}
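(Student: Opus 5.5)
This statement reads more like a definition than a theorem, so what needs proving is that it is well posed. Three things must hold: the orientation rule assigns a unique, consistent orientation \(z_\omega\) to each candidate pair; that orientation agrees with the HAPG precedences among the underlying physical actions; and it is safe to lift those action-level dependencies to a single capsule-level dependency. The argument will use Lemma~\ref{lem:capsule-conflict-closure} and the rank-consistency argument from Lemma~\ref{lem:hapg-safety-deadlock}.

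\textbf{Step 1: the inherited edges between the two capsules all point the same way.} By construction of \(\omega_{ij}^{\zeta}\), the HAPG edges between \(Q_i^{p:q}\) and \(Q_j^{s:t}\) do not form bidirectional dependencies. So every inter-robot edge in \(\mathcal E_{\mathrm{inter}}\) joining an action of \(Q_i^{p:q}\) to an action of \(Q_j^{s:t}\) has the same direction, say from \(i\) to \(j\).

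\textbf{Step 2: the orientation rule reproduces that direction.} I will show that the nominal times then satisfy \(\hat T_g(Q_i^{p:q})\le \hat T_s(Q_j^{s:t})\).
\begin{itemize}
\item The edge from \(a_i^q\) (or the latest conflicting action of \(Q_i^{p:q}\)) to an action of \(Q_j^{s:t}\) means that action cannot start before \(a_i^q\) completes.
\item The self edges of \(Q_j^{s:t}\) then push the entry of \(Q_j^{s:t}\) no earlier than that completion. This uses the entry/exit inequalities of Definition~\ref{def:AC}, taking \(\hat T_s,\hat T_g\) as the earliest times consistent with those constraints.
\end{itemize}
Hence \(z_\omega\) defined by the time comparison coincides with the direction of the edges. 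The two capsules cannot satisfy both inequalities unless they are degenerate with zero total duration, which non-singleton capsules with positive \(\Delta\) exclude. Exactly one of the two inequalities holds, so the orientation is unique.

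\textbf{Step 3: lifting to the capsule level is safe.} Replacing the bundle of action edges with the single precedence \(Q_i^{p:q}\prec Q_j^{s:t}\) is conservative: \(Q_j^{s:t}\) enters only after \(Q_i^{p:q}\) has been completed. By Lemma~\ref{lem:capsule-conflict-closure}, the actions of robot \(i\) after index \(q\) never conflict with \(Q_j^{s:t}\). So once \(i\) has exited its capsule, no other collision constraint involving \(Q_j^{s:t}\) and \(p_i\) remains, and the lifted edge preserves the collision-freeness argument of Lemma~\ref{lem:hapg-safety-deadlock}. The non-singleton condition only ensures the lift is nontrivial; singleton pairs are already HAPG edges.

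\textbf{Step 4: switching the orientation is admissible.} If \(z_\omega\) is flipped, the same argument applies with the roles of \(i\) and \(j\) exchanged, using the symmetric half of Lemma~\ref{lem:capsule-conflict-closure}. The new entry time must then respect \(\hat T_s(Q_i^{p:q})\ge \hat T_g(Q_j^{s:t})\), which is exactly the requirement that the temporal constraints of Definition~\ref{def:AC} continue to hold "whenever switching".

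The main obstacle is Step 2. Nominal times come only from lower bounds, so a delayed robot could make \(\hat T\) values appear inconsistent with the rank order. I would first fix \(\hat T\) as the earliest schedule compatible with the HAPG, where the rank order gives a topological order by Lemma~\ref{lem:hapg-safety-deadlock}. With that choice the time comparison follows from a longest-path argument along the HAPG edges.
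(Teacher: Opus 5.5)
\paragraph{Gap in Step 2.} Your Step 2 fails as written. An edge \((a_i^q,a_j^\ell)\) with \(\ell>s\) only gives \(T_s(a_j^\ell)\ge T_g(a_i^q)\). The self edges \(a_j^s\to\cdots\to a_j^\ell\) make \emph{later} actions of robot \(j\) wait for earlier ones, so they put no lower bound on the entry time \(T_s(a_j^s)=T_s(Q_j^{s:t})\).

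The longest-path repair in your last paragraph has the same hole. Bounding the earliest start of \(a_j^s\) requires a directed HAPG path \emph{into} \(a_j^s\) from \(a_i^q\), and you never exhibit one. Moreover, if \(\hat T\) is constrained only by Definition~\ref{def:AC}, which is purely intra-robot, the two robots' times are not linked at all.

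The missing idea is to use the constant signature at the endpoint of the interval. Since \(\Gamma_{i\leftarrow j}(s)=\mathcal I\ni q\), we have \(\operatorname{C}(a_i^q,a_j^s)=1\), and by your Step 1 the edge \((a_i^q,a_j^s)\) itself lies in \(\mathcal E\). In the opposite orientation, \(p\in\Gamma_{i\leftarrow j}(t)\) gives \((a_j^t,a_i^p)\in\mathcal E\). This has three consequences:
\begin{itemize}
\item \(T_g(Q_i^{p:q})\le T_s(Q_j^{s:t})\) holds in every HAPG-respecting schedule, with no earliest-time or longest-path argument.
\item It supplies the ``at least one'' half of your uniqueness claim; the duration argument only gives ``at most one''.
\item It shows the lift in Step 3 is exact, not merely conservative.
\end{itemize}

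\paragraph{Comparison with the paper.} Once repaired, your core argument matches the paper's. Its proof of the non-singleton case is your Steps 3--4: Lemma~\ref{lem:capsule-conflict-closure} plus the absence of bidirectional dependencies, yielding \(T_g(Q_j^{s:t})\le T_s(Q_i^{p:q})\) after the switch.

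The paper does not prove that the time-based initial orientation agrees with the HAPG edges. It simply asserts \(T_g(Q_i^{p:q})\le T_s(Q_j^{s:t})\) for \(z_\omega=0\), so your Steps 1--2 are a genuine addition.

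Conversely, the paper uses the non-singleton hypothesis explicitly: it argues that switching two singletons amounts to reversing two adjacent actions and is invalid. Your argument never uses that hypothesis. Your reading that it ``only ensures the lift is nontrivial'' differs from the paper's, though this is harmless for the sufficiency direction being proved.
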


% 证明CPC的条件
\begin{proof}
Without loss of generality, assume \(z_\omega=0\). Let \(n_i=q-p+1\) and \(n_j=t-s+1\). If \(n_i=n_j=1\), then both capsules are singletons. Since each capsule corresponds to one action, \(Q_i^{p:q}=\{a_i^p\}\) and \(Q_j^{s:t}=\{a_j^s\}\), switching to \(z_\omega=1\) is equivalent to reversing the two adjacent actions, which is an invalid switch. This corresponds to the following relation between two robots.
Now suppose that \(\max\{n_i,n_j\}\ge 2\). 
Initially, when \(z_\omega=0\), the relation is given by
\(T_g(Q_i^{p:q})\le T_s(Q_j^{s:t})\).
After switching to \(z_\omega=1\), Lemma~\ref{lem:capsule-conflict-closure} ensures that no subsequent action outside the paired capsules conflicts with the opposite capsule, and the absence of bidirectional dependencies within the CPC guarantees acyclicity between them. That is, before any action in
\(Q_i^{p:q}\) starts, all actions in \(Q_j^{s:t}\) that appear in
\(\Gamma_{i\leftarrow j}\) have already been completed, satisfying the constraint
\(T_g(Q_j^{s:t})\le T_s(Q_i^{p:q})\). Hence, \(\omega\) is a switchable CPC.
\end{proof}

As shown in Fig.~\ref{fig:cpc}, the paths of robots \(A\) and \(B\)
generate the following sequences of action capsule:
\(
\begin{aligned}
&\langle Q_A^{1:1}, Q_A^{2:2}, Q_A^{3:3}\rangle,
&\langle Q_B^{1:1},\ldots,Q_B^{5:5}\rangle.
\end{aligned}
\)
The conflict signature of robot \(B\) with respect to robot \(A\) over actions
\(1\) to \(4\) is given by
\(
\Gamma_{A\leftarrow B}(1,\ldots,4)=\{1,2\},
\)
which induces the CPC:\((Q_A^{1:2},Q_B^{1:4})\). Since
\(\hat T_g(Q_A^{1:2})\le \hat T_s(Q_B^{1:4})\), its initial orientation is
\(Q_A^{1:2}\prec Q_B^{1:4}\); hence, actions in \(Q_B^{1:4}\) are admitted only
after \(Q_A^{1:2}\) is completed. In this example, the Kiva-like robot can enter the
intersection only after the forklift has left it completely, while this capsule
order can still be switched online when both capsules remain $\mathrm{staged}$.

\subsubsection{CAPH} We continue to present the formal definition of CAPH and its online optimization process in this part.

\begin{definition}[Conjugate Action-Precedence Hypergraph]
The CAPH is defined as
\(
\mathcal H=(\mathcal Q,\Pi_f,\Omega_{\mathbf z}),
\)
where \(\mathcal Q\) is the set of action capsules,
\(\Pi_f\subseteq\mathcal Q\times\mathcal Q\) is the set of fixed
capsule precedences, \(\Omega\) is the set of CPCs, and \(\mathbf z=\{z_\omega\mid \omega\in\Omega\}\) is the set of binary variables to represent the orientations.
\end{definition}

For a given configuration \(\mathbf z\), CAPH induces an active
capsule precedence set
\(
\Pi(\mathbf z)=\Pi_f\cup\Omega(\mathbf z),
\)
where \(\Omega(\mathbf z)\) contains the dependencies selected
by each CPC. During online process, the orientation of a CPC can only be switched when both capsules are in the state of \(\mathrm{staged}\).

\begin{definition}[Inter-robot deadlock cycle in CAPH]
\label{def:deadlock-cycle}
Given the active precedence set \(\Pi(\mathbf z)\), a CAPH deadlock cycle exists if there is a sequence of capsules
\(
Q_{\alpha_1}^{b_1:e_1},Q_{\alpha_2}^{b_2:e_2},\ldots,Q_{\alpha_M}^{b_M:e_M}
\)
with \(M\geq2\), \(\alpha_m\in\mathcal R\), and at least two robots such that
\[
\bigl(Q_{\alpha_m}^{b_m:e_m},Q_{\alpha_{m+1}}^{b_{m+1}:e_{m+1}}\bigr)
\in\Pi(\mathbf z),\quad m=1,\ldots,M,
\]
where \(\alpha_{M+1}=\alpha_1\), \(b_{M+1}=b_1\), and \(e_{M+1}=e_1\). The action-level cycle is
\[
Z=\bigcup_{m=1}^{M}\{\,a_{\alpha_m}^{k}\mid k=b_m,\ldots,e_m\,\}.
\]
The set of all such action-level cycles within a horizon \(K\) is denoted by \(\mathfrak C_K\).
The CAPH is deadlock-free if \(\mathfrak C_K=\emptyset\).
\end{definition}

\subsubsection{Rolling Adjustment of CAPH}
\label{sec:opt-CAPH}
Inspired by the receding-horizon schedule re-ordering in SADG~\cite{SADG}, we
formulate the rolling adjustment as a MILP to optimize CAPH. Since paths are
released at most \(K\) steps in each scheduling cycle, the optimization only
involves the unreleased capsules within this bounded horizon and is therefore
lightweight. Let \(\mathcal Q_K\subseteq\mathcal Q\) be the set of
$\mathrm{staged}$ capsules, \(\Omega_K\subseteq\Omega_{\mathbf z}\) be
the CPCs, and \(\Pi_{f,K}\) be the fixed
precedences. Let \(z_\omega^0\) denote the current
orientation of CPC \(\omega\), and let \(\widehat\Delta(Q)\) be the remaining
execution duration of capsule \(Q\).

The MILP of rolling adjustment is formulated as:
\begin{equation}
\label{eq:caph-rolling-optimization}
\begin{aligned}
\min_{\substack{\mathbf z_K,\mathbf T_s,\mathbf T_g\\
z_\omega\in\{0,1\},\,\forall\omega\in\Omega_K}} \quad
& \sum_{Q\in\mathcal Q_K^{\mathrm{end}}} T_g(Q) \\
\mathrm{s.t.}\quad
& T_g(Q)\ge T_s(Q)+\widehat\Delta(Q),\\
& T_s(Q')\ge T_g(Q),\\
& T_s(Q')\ge T_g(Q)-M_{\mathrm B} z_\omega,\\
& T_s(Q)\ge T_g(Q')-M_{\mathrm B}(1-z_\omega).
\end{aligned}
\end{equation}
Here \(\mathcal Q_K^{\mathrm{end}}\) contains the last horizon capsule of each
robot considered in the current release, and \(M_{\mathrm B}\) is a sufficiently large
constant. The objective minimizes the cumulative predicted horizon-exit time.
The binary switch variables satisfy
\(\mathbf z_K=\{z_\omega\mid \omega\in\Omega_K\}\). $\widehat\Delta(Q)$ is the calculated remaining duration of capsule $Q$.
The duration constraint is imposed for all \(Q\in\mathcal Q_K\), the fixed
precedence constraint for all \((Q,Q')\in\Pi_{f,K}\), and the two
big-\(M_{\mathrm B}\) constraints for every CPC
\(\omega=(Q,Q')\in\Omega_K\). A switch with
\(z_\omega\ne z_\omega^0\) is admissible only when
\(\xi(Q)=\xi(Q')=\mathrm{staged}\), preventing dependencies from being reversed after either capsule has already been active. If a switched active precedence set contained a directed capsule cycle, the corresponding start and completion times would require at least one capsule to start after its own completion, contradicting the feasible temporal ordering. Hence the optimized hypergraph satisfies \(\mathfrak C_K(\mathbf z_K)=\emptyset\).

\begin{theorem}The CAPH-based persistent path execution is collision-free and deadlock-free with paths planned by the MRPP solver proposed in \ref{sec:path_planning}.
\end{theorem}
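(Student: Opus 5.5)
The plan is an induction over scheduling rounds, with invariants inherited from Lemma~\ref{lem:hapg-safety-deadlock}. The invariant at the start of every round is: the released prefixes, together with the current active precedence set \(\Pi(\mathbf z)\), form an acyclic capsule graph, and every pair of conflicting actions is ordered by some edge of \(\Pi(\mathbf z)\). The base case is the initial CAPH. It is obtained from the HAPG of a feasible rank-consistent plan through the singleton wrapping \(\psi\), which preserves all HAPG edges. Lemma~\ref{lem:hapg-safety-deadlock} therefore gives both ordering of all conflicts and acyclicity. Capsule lifting only merges contiguous actions of one robot into a CPC. By Lemma~\ref{lem:capsule-conflict-closure}, every conflict between \(Q_j^{s:t}\) and \(p_i\) lies inside the conjugate capsule \(Q_i^{p:q}\). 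Hence replacing the action-level edges by a single capsule-level precedence still orders every conflicting pair.

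\textbf{Collision-freeness.} Take any two actions \(b,c\) of distinct robots with \(\operatorname C(b,c)=1\). There are two cases.
\begin{enumerate}
\item They are ordered by a fixed edge in \(\Pi_f\). Then the admission rule (start only after all predecessors are completed) makes the swept regions \(\mathcal W_\epsilon\) temporally disjoint, exactly as in Lemma~\ref{lem:hapg-safety-deadlock}.
\item They belong to a CPC \(\omega=(Q_i^{p:q},Q_j^{s:t})\). Under either orientation \(z_\omega\), the big-\(M_{\mathrm B}\) constraints of \eqref{eq:caph-rolling-optimization} force \(T_g\) of one capsule to be at most \(T_s\) of the other. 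By Lemma~\ref{def:CPC}, a switch is only performed when both capsules are \(\mathrm{staged}\), so no partially executed capsule ever has its order reversed.
\end{enumerate}
Conflicts with idle robots are covered because the grouping step adds every idle robot \(j\) with \(\operatorname C(a_i^k,\bar a_j(v_j^{\mathrm{cur}}))=1\) to the local MRPP problem. Conflicts introduced by newly planned suffixes are covered because SICD rejects any joint transition whose occupancy bitmaps intersect. When a new bounded prefix is appended, its conflicts with other robots' unreleased actions get HAPG edges oriented by rank, so the invariant extends to the new round.

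\textbf{Deadlock-freeness.} I would argue three points.
\begin{enumerate}
\item The MILP solution is feasible with finite times. A directed cycle in \(\Pi(\mathbf z_K)\) would require \(T_s(Q)\ge T_g(Q)\ge T_s(Q)+\widehat\Delta(Q)>T_s(Q)\), which is impossible, so \(\mathfrak C_K(\mathbf z_K)=\emptyset\).
\item Edges among already released or in-progress capsules are never reversed, and newly appended actions receive edges consistent with their rank order. So no cycle can pass through the boundary between the old and new portions.
\item Any cycle detected in \(\mathfrak C_K\) during group identification triggers local replanning. The fallback in the Remark (splitting the group, shrinking \(K\)) removes the cycle before release, since a horizon of \(K=1\) with persistent terminal occupation is trivially acyclic.
\end{enumerate}
A finite acyclic precedence graph always contains a staged capsule with all predecessors completed. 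Every robot with remaining actions therefore eventually progresses, which gives deadlock-freeness.

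\textbf{Main obstacle.} The hardest step is the interface between consecutive rounds. The CPCs and switched orientations from the previous round must combine with newly planned prefixes without creating a cycle that spans both. The MILP only certifies acyclicity within \(\mathcal Q_K\), and the rank order is only defined per plan. My first attempt would be a global potential: take the MILP's \(T_s\) values for old capsules and extend them by ranks offset by the maximum old \(T_g\) for new actions. Every edge then goes forward in this potential, provided that new actions are always ranked after all previously released ones. I would argue this follows from replanning starting at the terminal vertex of the last released segment.
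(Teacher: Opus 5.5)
Your base case and your MILP argument (a directed cycle would force \(T_s(Q)\ge T_g(Q)>T_s(Q)\)) reproduce essentially the paper's whole proof. The gap is the inductive step across scheduling rounds, which you flag as the main obstacle.

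\textbf{Deadlock point 2 is false as stated.} Once a CPC has been switched, the active set \(\Pi(\mathbf z)\) no longer agrees with rank, so orienting new-versus-old conflicts by rank can close a cycle. Let \(\omega=(Q_A^{p:q},Q_B^{s:t})\) start as \(Q_A^{p:q}\prec Q_B^{s:t}\) and be switched to \(Q_B^{s:t}\prec Q_A^{p:q}\) while both capsules are staged. (Note that staged capsules may already belong to released prefixes, so ``released edges are never reversed'' is not accurate either.) In the next round, \(A\)'s path is continued after \(a_A^q\), and a newly planned action \(y\) conflicts with a still-staged \(x\in Q_B^{s:t}\). Lemma~\ref{lem:capsule-conflict-closure} does not forbid this, because it refers to the old \(p_A\). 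If the new plan's rank puts \(y\) before \(x\), then \(Q_B^{s:t}\to Q_A^{p:q}\to y\to x\in Q_B^{s:t}\) is a deadlock.

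\textbf{Your potential does not repair this on its own.} It works only if \emph{every} conflict between a new action and a released, uncompleted capsule is oriented old\(\to\)new. Restarting from the terminal vertex of the last released segment places \(y\) after \(A\)'s own actions, not after \(B\)'s action \(x\). The framework states no old\(\to\)new rule. On the contrary, it augments the dependency structure with the new paths and explicitly detects the resulting cycles \(\mathfrak C_K\), which presupposes that such cycles can occur.

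\textbf{Point 3 is also unsupported.} Inherited, possibly switched, dependencies are present for any \(K\), so \(K=1\) is not trivially acyclic. The fallback Remark also gives no termination guarantee, which is why the paper's proof is conditioned on a solvable instance. Finally, going from ``some staged capsule is ready'' to ``every robot eventually progresses'' in a growing graph again requires that old capsules never acquire new predecessors.

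\textbf{What the paper does instead.} The paper's proof does not attempt this induction. It takes the MRPP output as a feasible rank-consistent plan. It obtains an acyclic initial CAPH from Lemma~\ref{lem:hapg-safety-deadlock} via \(\psi\) and capsule lifting. It then argues that staged-only switches satisfying the MILP temporal constraints keep \(\Pi(\mathbf z)\) acyclic. Collision-freeness follows because every conflicting pair receives a fixed capsule precedence or a CPC, and the admission rule enforces it.

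You have three ways forward:
\begin{enumerate}
\item Restrict your claim to this per-plan argument, matching the paper.
\item Make the lifelong invariant true by adding an explicit old\(\to\)new orientation rule for conflicts between new actions and released, uncompleted capsules. Under that rule your potential argument does go through.
\item Rely on cycle detection plus replanning, together with the solvability hypothesis.
\end{enumerate}

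Two smaller corrections. First, SICD only constrains jointly planned groups; independent robots are planned by A* on \(G^{\mathrm g}\). So conflicts in new suffixes are covered by the precedence edges, not by SICD. Second, temporal separation during execution is enforced by the admission rule, not by the MILP's predicted \(T_s\) and \(T_g\).
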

\begin{proof}
For a solvable instance, the MRPP solver returns a feasible ranked paths set for robots.
During planning, the SICD mechanism in Sec.~\ref{sec:SICD} enables MRPP to
exclude all roadmap-node occupations induced by swept footprints. The remaining
execution safety is then enforced by CAPH. Consider two inter-robot actions
\(a_{\alpha}^{k}\) and \(a_{\beta}^{m}\). If
\(\operatorname{C}(a_{\alpha}^{k},a_{\beta}^{m})=0\), they are non-conflicting
under the model in Sec.~\ref{sec:problem_formulation}. If
\(\operatorname{C}(a_{\alpha}^{k},a_{\beta}^{m})=1\), Def.~\ref{def:HAPG}
introduces an action-level precedence between them. CAPH lifts such conflict to fixed
capsule precedences or CPC, ensuring non-collision among robots under the precedence constraints.
By Lemma~\ref{def:CPC}, a valid CPC provides a capsule-level
temporal order for the conflicting capsules. Since a capsule is active only
after all its active predecessors have been \(\mathrm{completed}\), conflicting
actions are temporally separated during asynchronous execution. Therefore the
CAPH-based execution is collision-free.
For deadlock freedom, by Lemma~\ref{lem:hapg-safety-deadlock}, the HAPG
induced by the feasible MRPP paths is acyclic and admits deadlock-free
execution. As initial CAPH is obtained from this HAPG by lifting selected
action-level precedences to fixed capsule precedences or initial CPCs,  it admits the same feasible temporal ordering among capsules. During rolling adjustment in
Sec.~\ref{sec:opt-CAPH}, \(\mathrm{completed}\) or \(\mathrm{in\mbox{-}progress}\) capsule precedences cannot be reversed, and a CPC can be switched only when both
capsules are still \(\mathrm{staged}\). Each accepted switch also satisfies the
temporal constraints in \eqref{eq:caph-rolling-optimization}; otherwise a
directed capsule cycle would require some capsule to start after its own
completion. Thus \(\Pi(\mathbf z)\) remains acyclic after every accepted
adjustment. Hence the persistent path execution is deadlock-free.
\end{proof}

As illustrated in Fig.~\ref{fig:caph}, the example demonstrates a CAPH and its rolling adjustment process. During execution, $r_1$ and $r_2$ are delayed due to human-induced dynamic blockages or robot failures. After optimization, $r_3$ and $r_4$ are allowed to execute their paths earlier while collision-free operation is preserved.

\begin{figure}[htbp]
    \centering
    \begin{subfigure}[b]{\linewidth}
        \centering
        \includegraphics[width=\linewidth]{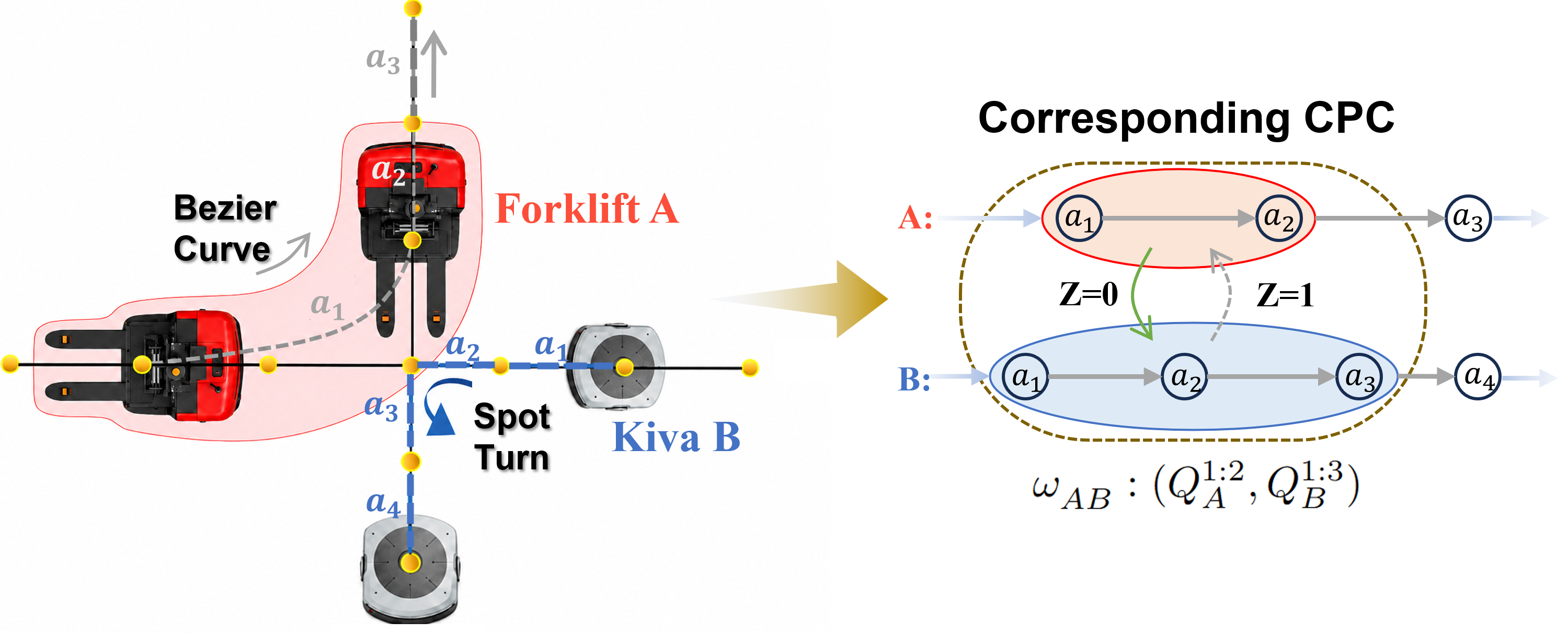}
        \caption{An example for CPC between a Forklift and a Kiva.}
        \label{fig:cpc}
    \end{subfigure}

    \vspace{-0.2em}

    \begin{subfigure}[b]{\linewidth}
        \centering
        \includegraphics[width=\linewidth]{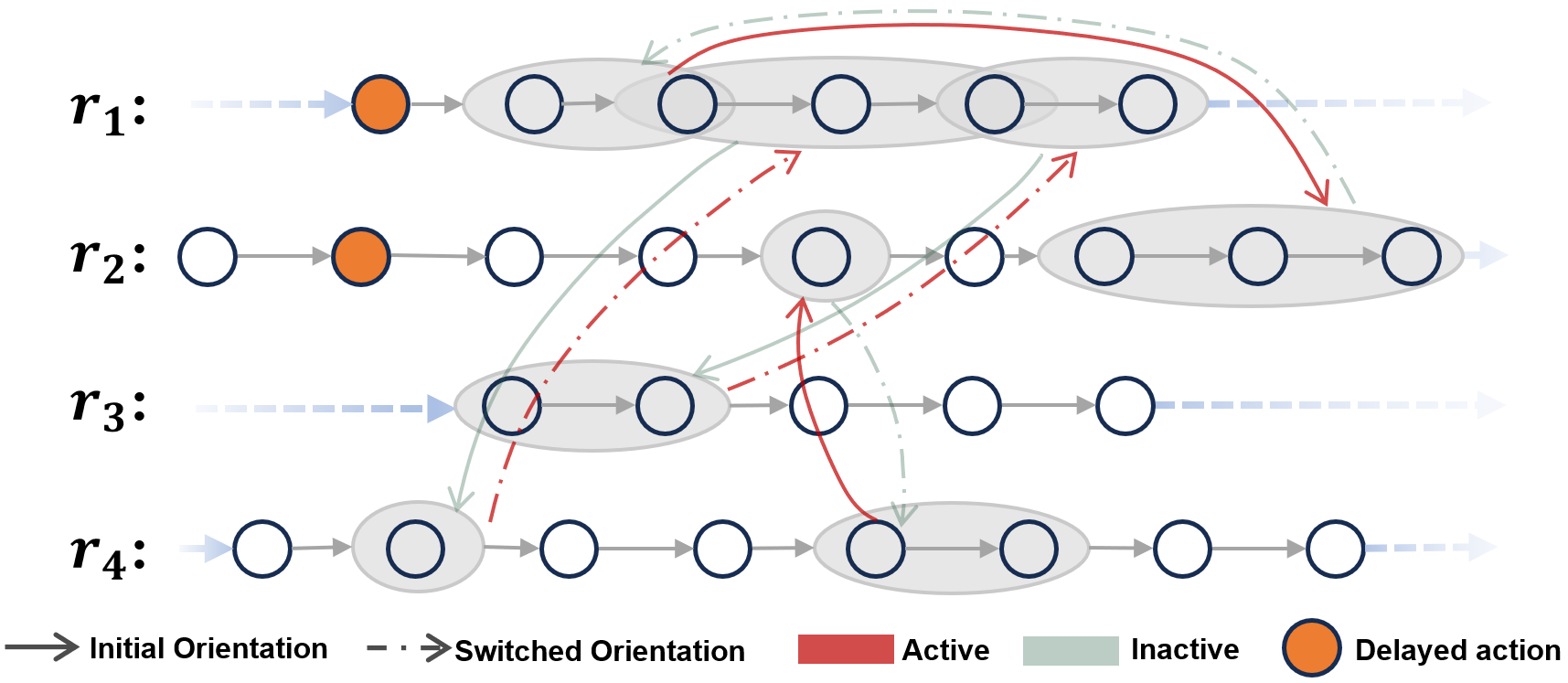}
        \caption{A CAPH for 4 robots, dependencies are switched for delayed events. The priority of action capsules of $r_3$ and $r_4$ are increased, while $r_1$ and $r_2$ have lagged behind.}
        \label{fig:caph}
    \end{subfigure}

    \caption{Illustration of examples of CAPH and its components.}
    \label{fig:cpc-caph}
\end{figure}

% CAPH is constructed in four steps. First, HAPG \(G_a=(A,E)\) is lifted to
% a singleton Action Capsule graph while preserving all original dependencies.
% Second, for each pair of robot paths, the Conflict Signature sequence is
% recorded along the compared path. Third, maximal consecutive singleton
% capsules with the same non-empty signature are grouped, and the conjugate
% capsule is obtained as the minimum contiguous span of that signature on the
% other path. Finally, candidate pairs satisfying the CPC condition are added
% to \(\Omega\), while non-switchable relations are retained in
% \(\Pi_f\), with all switch variables initialized according to the
% rank order \(\rho\).

% CAPH generalizes HAPG from pairwise action dependencies to capsule-level
% precedence relations. It preserves the original continuous-action execution
% model, reduces redundant inter-robot dependencies in local motion-induced
% conflict regions, and enables switchable passage orders over heterogeneous
% swept-region conflicts without replanning the underlying paths.

\section{Evaluation}

We evaluate the proposed method along three dimensions: real-time MRPP performance, robust path execution under external disturbances, and system-level lifelong coordination. For MRPP, the proposed SICD mechanism is integrated into representative solvers, including LaCAM*~\cite{lacam_star}, PIBT~\cite{pibt}, PBS~\cite{pbs}, ECBS~\cite{ECBS}, and CBS~\cite{CBS}.\footnote{We build the solvers based on the public implementations of LaCAM*~\url{https://github.com/Kei18/lacam0}, PIBT~\url{https://github.com/Kei18/pibt2}, and PBS/ECBS/CBS~\url{https://github.com/whoenig/libMultiRobotPlanning}.} These solvers are extended from grids to graph version. For each solver, we compare the original graph-agent implementation without continuous motion-collision checking, direct polygon-overlap (PO) detection, and the proposed SICD-based method.
For path execution, except for non-switchable HAPG, CAPH is compared with SG~\cite{sg}, an advanced segment-graph method that models robot motion at the segment level and greedily switches passage priorities. For system-level validation, SCALE is compared with two industrial baselines, CAEH~\cite{CAEH} and GNGW~\cite{glued-node}. CAEH alleviates congestion through a hierarchical topological architecture and local CBS-based conflict resolution, whereas GNGW uses glued nodes for safe operation and local give-way behaviors for collision avoidance.

To evaluate latency resilience under industrial disturbances, we introduce the uncertainty sources in Table~\ref{tab:execution-uncertainty} into the experiments in Secs.~\ref{subsec:execution-exp} and~\ref{subsec:system-level simulation}. Packet loss is sampled once per scheduling round; if triggered, the system loses the corresponding status update. Controller delay occurs independently before the start of each robot task, postponing execution by 5 s. Human-factor disturbances are sampled for each robot-task pair with a probability of 2\%; once triggered, the affected robot remains stationary for 120 s before resuming operation.

All experiments involve three robot types: forklifts, mobile manipulators, and Kiva robots. Table~\ref{tab:robot-parameters} summarizes the robot parameters, with the three robot types maintained at a fixed number ratio of $4:1:5$.

All programs are written in C++ and run on a workstation with an Intel Core i9-14900K CPU and 32 GB RAM.

\begin{table}[htbp]
  \centering
  \caption{Settings for External Disturbances in Experiments}
  \setlength{\tabcolsep}{4pt}
  \resizebox{0.65\columnwidth}{!}{%
  \begin{tabular}{@{}l c l@{}}
    \toprule
    \textbf{Item} & \textbf{Probability} & \textbf{Pattern} \\
    \midrule
    Packet loss & $1\%$ / planning cycle & State update loss \\
    Controller delay & $0.5\%$ / robot start & $5$ s start delay \\
    Human factor & $2\%$ / robot-task pair & 120 s pause \\
    \bottomrule
  \end{tabular}
  }
  \label{tab:execution-uncertainty}
\end{table}

\begin{table}[htbp]
  \centering
  \caption{Parameters of Heterogeneous Robots}
  \setlength{\tabcolsep}{4pt}
  \resizebox{\columnwidth}{!}{%
  \begin{tabular}{@{}l c c c@{}}
    \toprule
    \textbf{Parameter} & \textbf{Forklift (FL.)} & \textbf{Mobile Manipulator (M.M.)} & \textbf{Kiva} \\
    \midrule
    Unloaded robot size $(m \times m)$ & $(2.10 \times 0.96)$ & $(0.85 \times 0.85)$ & $(0.76 \times 0.76)$ \\
    Loaded robot size $(m \times m)$ & $(2.10 \times 1.20)$ & $(0.85 \times 0.90)$ & $(0.76 \times 0.76)$ \\
    Max velocity $(m/s)$ & $1.6$ & $1.0$ & $1.3$ \\
    Max acceleration $(m/s^2)$ & $0.6$ & $0.5$ & $0.7$ \\
    Max braking acceleration $(m/s^2)$ & $0.8$ & $0.7$ & $0.9$ \\
    Min turning radius $(m)$ & $1.6$ & $0.8$ & $0$ \\
    Safety radius $\epsilon$ $(m)$ & $0.3$ & $0.2$ & $0.15$ \\
    Battery consumption rate $(\%/h)$ & $15\%/h$ & $17\%/h$ & $8\%/h$\\
    Charging rate $(\%/h)$ & $50\%/h$ & $55\%/h$ & $80\%/h$\\
    \bottomrule
  \end{tabular}
  }
  \label{tab:robot-parameters}
\end{table}

% GNGW(glued-node give-away)
% CAEH(Corridor-Aware Extended-Horizon)

\subsection{Performance of One-shot MRPP}

We first evaluate conflict detection on MRPP instances sampled from the long-term operation in Sec.~\ref{subsec:system-level simulation}. Each triggered local MRPP group is recorded, yielding 10,000 instances with 2--20 robots. Fig.~\ref{fig:planning1} reports the cumulative solving-time distribution under a 500 ms timeout. LaCAM* with SICD solves 90.0\% of the instances while explicitly considering continuous motion-conflict detection (CCD), only 9.7\% lower than the original graph-agent solver without CCD. In contrast, the PO-based variant solves only 23.2\%. Fig.~\ref{fig:planning_overview} further reports the average total detections (TD) and true-positive rate (TPR). Introducing CCD increases the number of checks because swept-volume interactions reveal physically relevant conflicts beyond the assumption of agents. Overall, SICD provides the required physical safety screening while preserving much of the original MRPP search efficiency.

To further verify the applicability of SICD, we evaluate it on three maps from the public \textit{MovingAI MAPF} benchmark\footnote{\url{https://movingai.com/benchmarks/mapf}}. In these maps, each edge is assigned a length of 1.5 m, and edge curvature is adjusted according to robot turning-radius constraints. For each map, we randomly generate 50 instances under 11 robot-density settings from 0.02 to 0.30, with a 30 s solving timeout. Fig.~\ref{fig:success_rate} shows that SICD-based solvers maintain high success rates across different structures of maps. Compared with the original solvers without CCD, the success-rate reduction is generally below 8\% when the robot density is below 0.20 and remains within 22\% in denser settings. In particular, it largely preserves the original solver efficiency when the number of robots is below 100. By contrast, PO substantially reduces scalability: even advanced solvers such as LaCAM* and PIBT lose the ability to solve in time at low densities. For example, on den312d, CBS with PO fails to solve any instance even at density 0.02. Fig.~\ref{fig:solving_time} reports the median solving time over solvable cases, with shaded bands indicating the minimum--maximum range. PO-based solvers spend substantial effort on repeated geometric overlap tests, whereas SICD keeps the overhead close to the original search and remains practical for MRPP.

\begin{figure}[t]
    \centering
    \captionsetup[subfigure]{font=scriptsize,skip=1pt}
    \begin{subfigure}[t]{0.5\linewidth}
        \centering
        \vspace{0pt}
        \includegraphics[width=\linewidth]{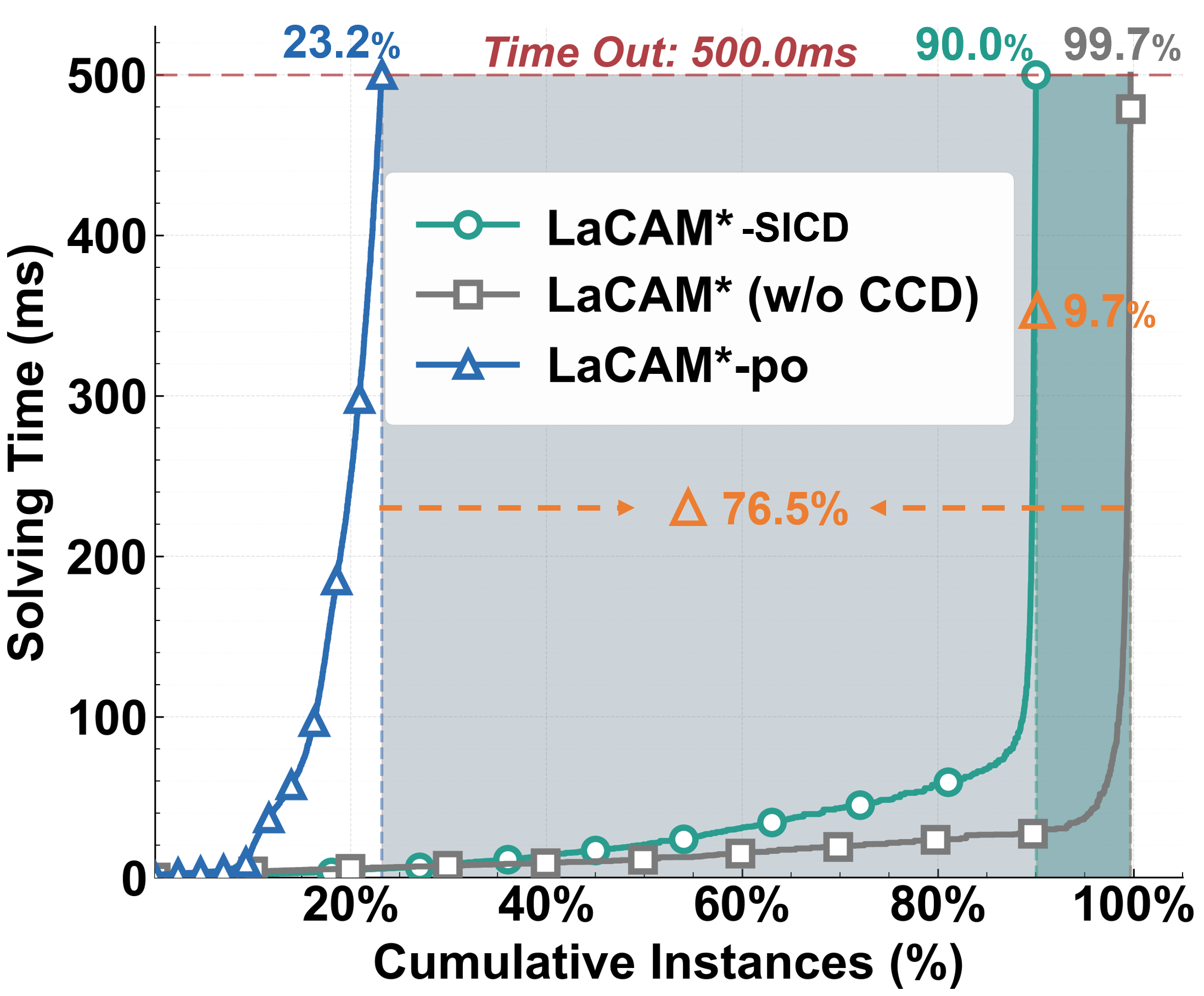}
        \caption{Performance of LaCAM* variants.}
        \label{fig:planning1}
    \end{subfigure}
    \hfill
    \begin{subfigure}[t]{0.43\linewidth}
        \centering
        \vspace{0pt}
        \begingroup
        \scriptsize
        \setlength{\tabcolsep}{4pt}
        \renewcommand{\arraystretch}{0.94}
        \resizebox{0.93\linewidth}{!}{%
        \begin{tabular}{@{}c c c c c@{}}
        \toprule
        \multirow{2}{*}{\textbf{$N_r$}}
        & \multicolumn{2}{c}{\textbf{w/o CCD}}
        & \multicolumn{2}{c}{\textbf{SICD}} \\
        \cmidrule(lr){2-3} \cmidrule(lr){4-5}
        & \textbf{TD} & \textbf{TPR(\%)} & \textbf{TD} & \textbf{TPR(\%)} \\
        \midrule
        2  & 26  & 1.3 & 344  & 8.7  \\
        4  & 45  & 3.2 & 798  & 5.8  \\
        6  & 73  & 3.3 & 853  & 15.8 \\
        8  & 95  & 3.7 & 1913 & 11.5 \\
        10 & 122 & 5.7 & 1813 & 9.9  \\
        12 & 144 & 4.9 & 1600 & 7.5  \\
        14 & 205 & 7.6 & 2109 & 8.0  \\
        16 & 297 & 9.2 & 2661 & 13.5 \\
        18 & 274 & 8.2 & 3054 & 11.3 \\
        20 & 277 & 7.5 & 5031 & 12.1 \\
        \bottomrule
        \end{tabular}
        }

        \vspace{0.6ex}
        \parbox{0.93\linewidth}{\centering\scriptsize TD: total detections; TPR: true positive rate.}
        \endgroup
        \vspace{0pt}
        \caption{Number of conflicts during search.}
        \label{tab:exp3_conflict_rate}
    \end{subfigure}
    \caption{Planning performance on an industrial roadmap graph.}
    \label{fig:planning_overview}
\end{figure}

\begin{figure*}[htbp]
    \centering
    \includegraphics[width=0.9\linewidth]{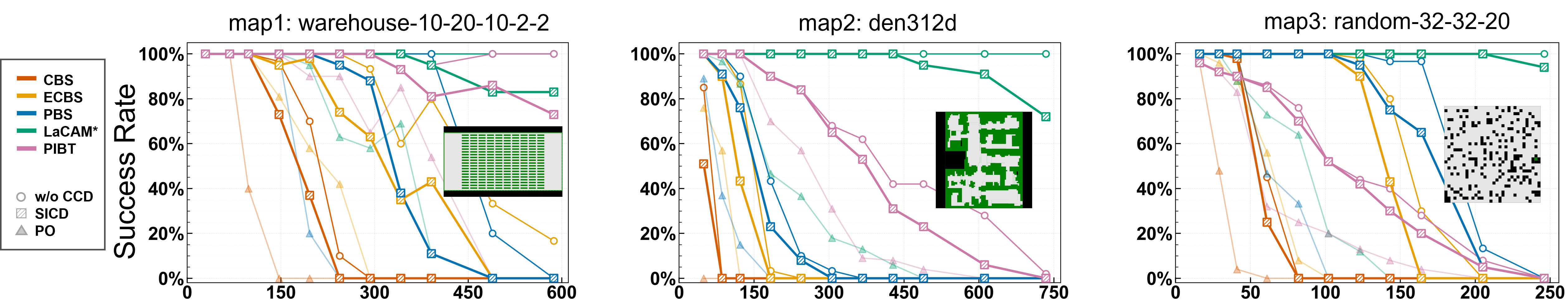}
    \caption{Success rate versus the number of robots for representative MRPP solvers with and without CCD.
}
    \label{fig:success_rate}
\end{figure*}

\begin{figure*}[htbp]
    \centering
    \includegraphics[width=0.93\linewidth]{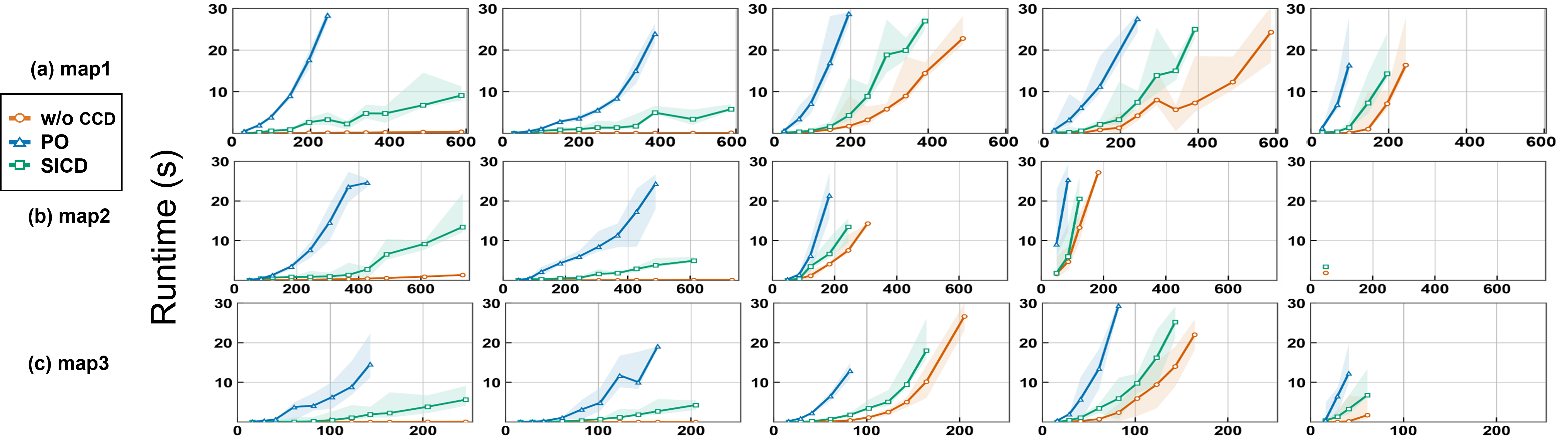}
    \caption{Computation time of successful MRPP instances in Fig.~\ref{fig:success_rate}.}
    \label{fig:solving_time}
\end{figure*}

\subsection{Performance of Path Execution under Uncertainty}
\label{subsec:execution-exp}

We evaluate path execution policies under external disturbances. Statistics from the system-level simulation in Sec.~\ref{subsec:system-level simulation} show that local robot groups in SCALE rarely contain more than 20 robots. To further test scalability, we randomly generate solvable cases with group sizes
\(N_r\in\{10,20,30,40,50\}\) in the same scenario and sample 50 cases for each size. The disturbance settings follow Table~\ref{tab:execution-uncertainty}.
Table~\ref{tab:execution_comparison} shows that CAPH achieves the shortest execution time for all group sizes. Compared with the non-switchable HAPG, CAPH reduces execution time by up to 18.4\% when the scale is 50. Since HAPG strictly follows the precedence relations generated by MRPP, a local delay can propagate through fixed dependencies and increase the completion time of the entire group. SG also allows dependency switching, but its execution-time reduction over HAPG remains below 8.0\% in all tested cases, indicating that greedy switching over coarse action segments provides limited resilience to external disturbances.

The advantage of CAPH over SG becomes more pronounced as the group size increases. In the 50-robot cases, it reduces the average execution time by about 110s compared with SG. It also achieves this improvement with far fewer dependency switches: its switch count is about one third of SG across all cases. The capsule-level dependency model therefore avoids many ineffective or negative switches globally, reduces redundant switching computation, and helps prevent livelock caused by dependency oscillations.

\begin{table}[htbp]
  \centering
  \caption{Comparison of different execution strategies}
  \setlength{\tabcolsep}{3pt}
  \resizebox{0.9\linewidth}{!}{%
  \begin{tabular}{@{}c c c c c c@{}}
    \toprule
    \multirow{2}{*}{\textbf{$N_r$}} 
    & \multicolumn{3}{c}{\textbf{Total Time of Execution / s} ($\downarrow$)} 
    & \multicolumn{2}{c}{\textbf{Number of Switches}} \\
    \cmidrule(lr){2-4} \cmidrule(lr){5-6}
    & \textbf{HAPG} & \textbf{SG} & \textbf{CAPH} 
    & \textbf{SG} & \textbf{CAPH} \\
    \midrule
    10 & \stat{239.6}{16.9}  & \stat{237.6}{13.1}  & \statbf{212.3}{9.7}   & 33.5  & 10.7 \\
    20 & \stat{395.9}{68.5}  & \stat{387.4}{74.2}  & \statbf{297.1}{38.1}  & 116.4 & 33.5 \\
    30 & \stat{519.0}{90.1}    & \stat{502.1}{92.2}  & \statbf{427.5}{76.7}  & 181.3 & 57.1 \\
    40 & \stat{621.4}{130.5} & \stat{581.3}{105.3} & \statbf{533.8}{91.0}  & 240.3 & 106.3 \\
    50 & \stat{793.3}{199.9} & \stat{756.1}{168.5} & \statbf{647.8}{112.5} & 292.4 & 126.8 \\
    \bottomrule
  \end{tabular}
  }
  \vspace{0.2mm}
  \parbox{0.9\linewidth}{\footnotesize Values in parentheses denote standard deviations.}
  \label{tab:execution_comparison}
\end{table}

\subsection{System-level Simulation} 
\label{subsec:system-level simulation}
We conduct a lifelong system-level simulation in our real-world industrial project with a fleet of 300 heterogeneous robots. The roadmap graph contains 24,356 nodes, 60109 edges, and 1500 fixed storage locations.
Under the external-disturbance settings in
Table~\ref{tab:execution-uncertainty} and the robot parameters in
Table~\ref{tab:robot-parameters}, the system randomly releases 80 tasks every
60s. Each task includes pickup and drop-off operations, with a 5s operation
time. The experiment runs for three days.

The comparison of throughput over time is shown in
Fig.~\ref{fig:throughput-comparison}. SCALE rapidly increases its throughput
during the startup phase and reaches about \(50\) tasks/min within the first
10h, while CAEH and GNGW remain below about \(25\) tasks/min in the same
period. During the main operating interval from 10h to 60h, SCALE
maintains a high throughput level, mostly between \(40\) and \(55\) tasks/min
and peaking at about \(54\) tasks/min. In contrast, CAEH stays
around \(30\)--\(35\) tasks/min and requires multiple local interventions when
unsolved local conflicts occur, as marked by the crosses in the figure. Its
average throughput is therefore about 30\% lower than that of SCALE. GNGW
achieves a short-term peak of about \(35\) tasks/min, but its throughput starts
to collapse after about 30h and becomes nearly zero around 42h because local
give-way behaviors cannot recover from accumulated deadlocks. Consequently,
GNGW finishes less than 40\% of the total tasks, whereas SCALE completes the
released tasks and returns all robots to the idle state at the end.
Fig.~\ref{fig:response-time-comparison} reports the distribution of task
response time, measured as the waiting time between task assignment and the
moment when the assigned robot starts moving. SCALE has the most concentrated
distribution, with its main peak within about 10--20s and a fast decay after
60s. CAEH and GNGW exhibit right-shifted distributions, with peaks around
30s and 40s, respectively, and visibly heavier long tails caused by postponed
task admission under congestion and unresolved local conflicts. The average
task response times of SCALE, CAEH, and GNGW are 28s, 51s, and 69s,
respectively, meaning that SCALE reduces the average response time by about
45\% compared with CAEH and about 59\% compared with GNGW. Overall, the
proposed method achieves higher sustained throughput and shorter task response
time in the lifelong industrial simulation.

\begin{figure}[hpb]
    \centering
    \begin{subfigure}[t]{\linewidth}
        \centering
        \includegraphics[width=0.95\linewidth]{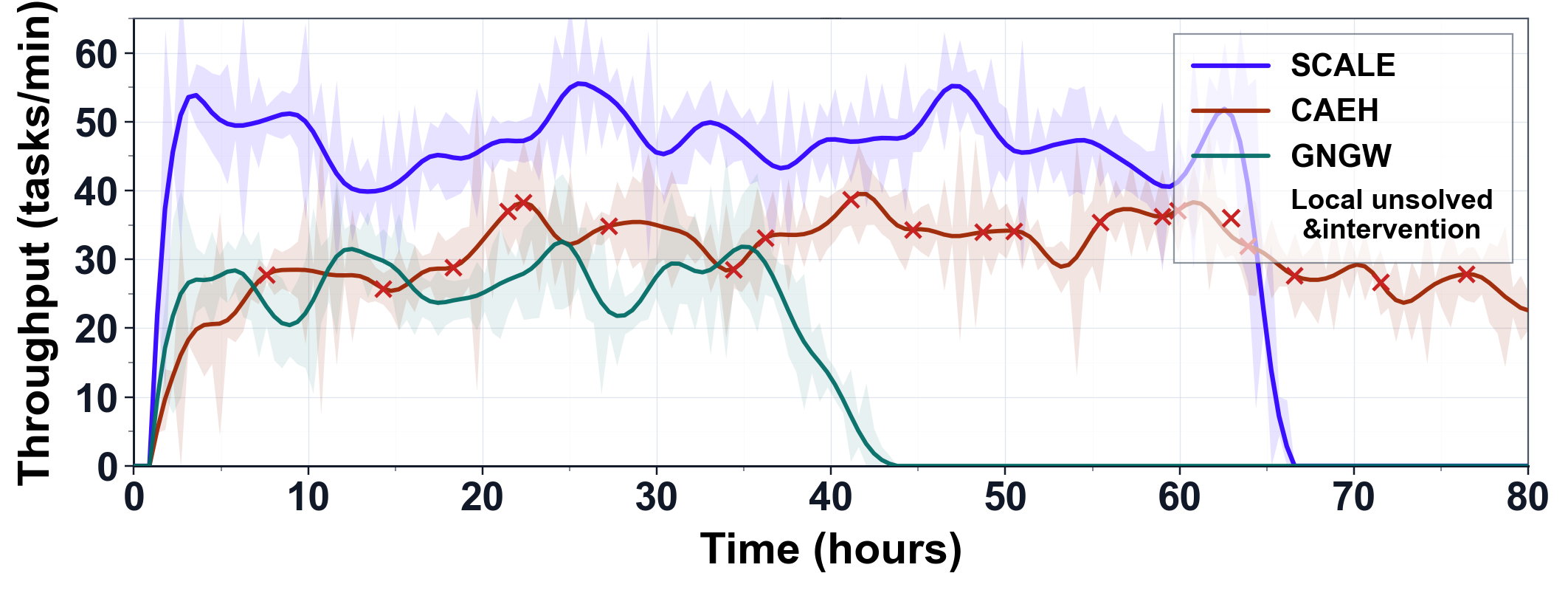}
        \caption{Comparison of throughput per minute.}
        \label{fig:throughput-comparison}
        \end{subfigure}

    \vspace{-0.5mm}
    \begin{subfigure}[t]{\linewidth}
        \centering
        \includegraphics[width=0.8\linewidth]{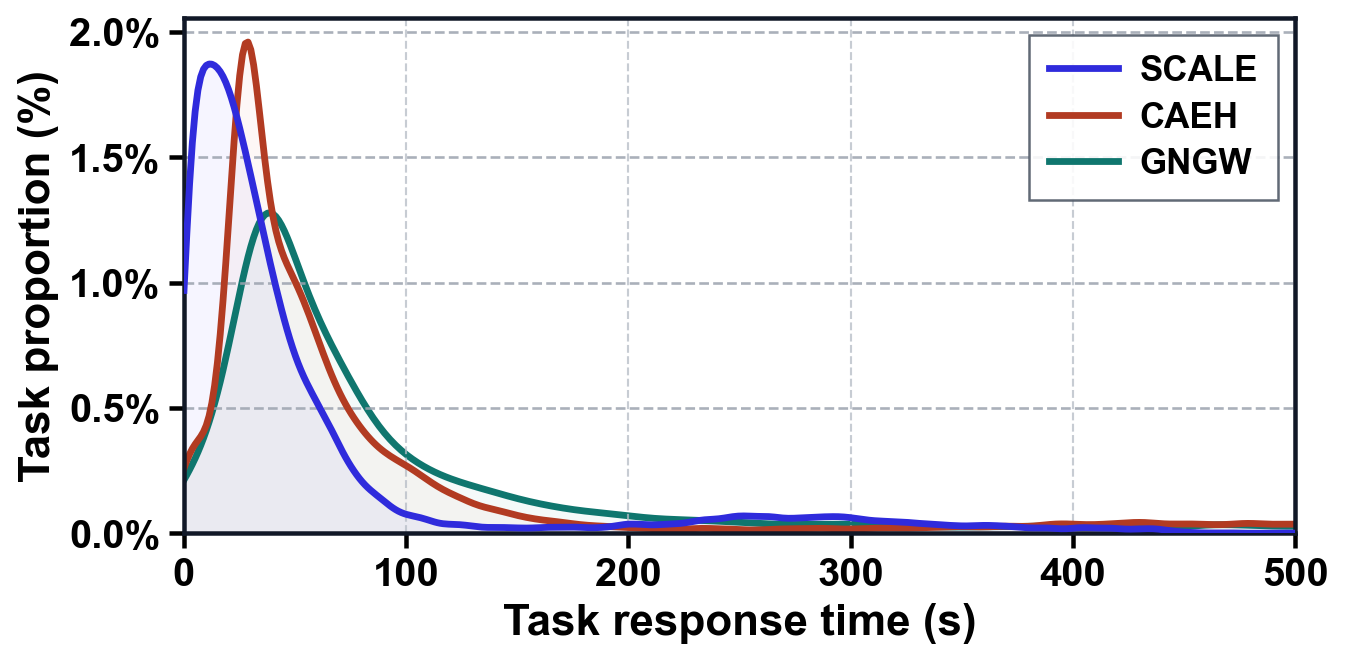}
        \caption{Comparison of response time for tasks.}
        \label{fig:response-time-comparison}
    \end{subfigure}
    \caption{Comparison of lifelong performance for running 300 robots for three days under three scheduling frameworks.}
    \label{fig:simulation-performance}
\end{figure}

\subsection{Industrial Deployment}

\begin{figure}[htpb]
    \centering
    \includegraphics[width=\linewidth]{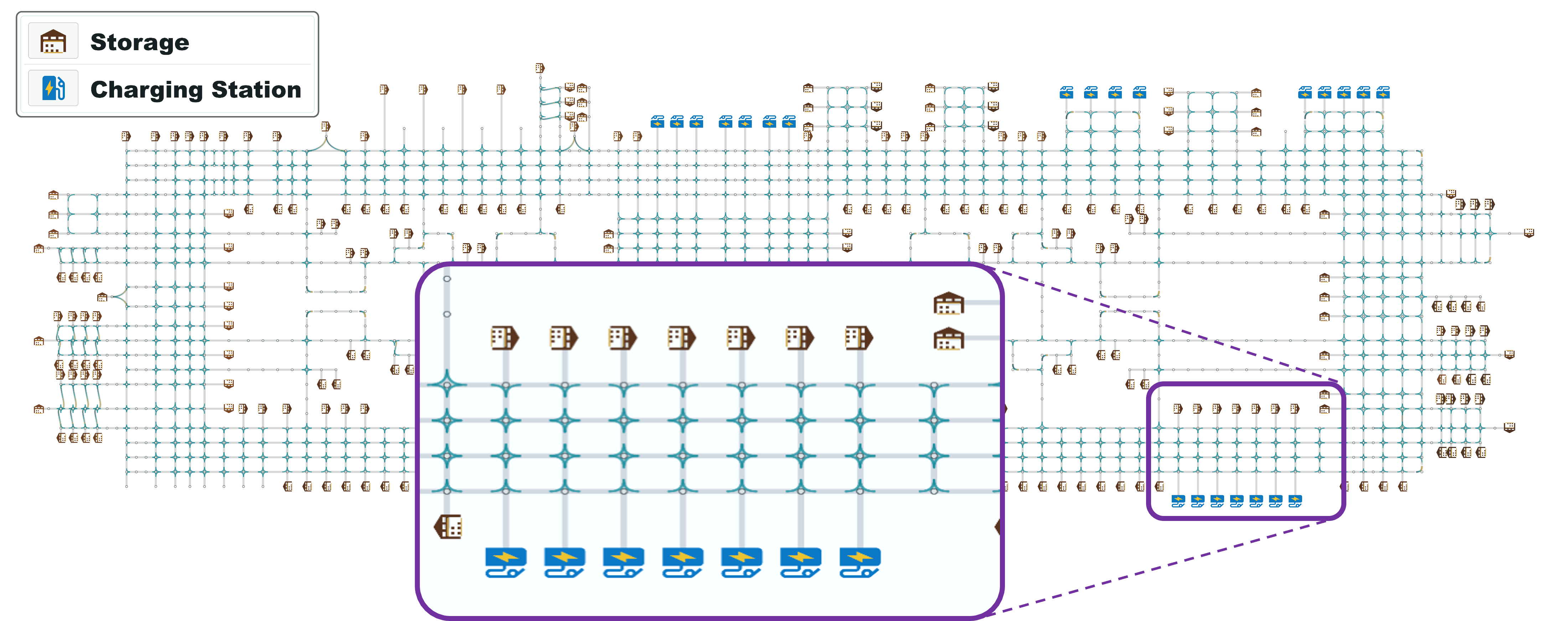}
    \caption{The roadmap network in a warehousing factory.}
    \label{fig:roadmap}
\end{figure}

The deployment validation is conducted in a real-world industrial warehousing scenario with a heterogeneous fleet of 100 robots; their physical parameters are summarized in Table~\ref{tab:robot-parameters}. The real-world deployment scene is shown in Fig.~\ref{fig:scen}, and the corresponding roadmap graph is shown in Fig.~\ref{fig:roadmap}. The site contains multiple storage blocks, long trunk corridors, dense aisle intersections, and several charging areas, which together create frequent merging, crossing, and temporary blocking situations during normal operation. The warehouse roadmap graph contains 5,839 nodes, 19,670 edges, and 274 fixed storage locations.
\begin{figure}[htpb]
    \centering
    \includegraphics[width=0.86\linewidth]{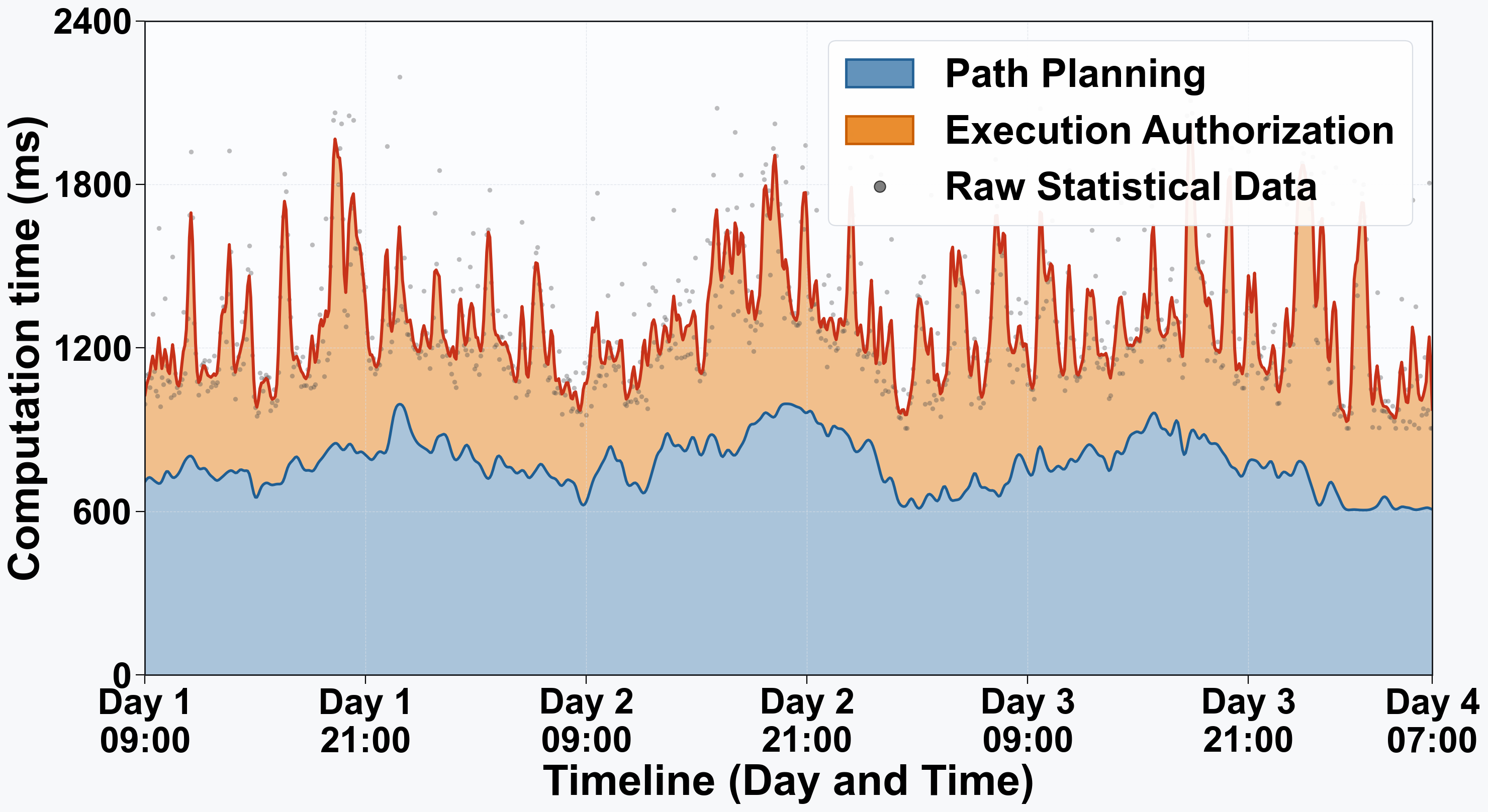}
    \caption{Computational time distribution in deployment.}
    \label{fig:warehouse-compute-time}
\end{figure}
During the monitored multi-day operation, the 100 robots complete approximately 30,000 tasks in total, corresponding to more than 100 tasks per robot per day on average. Scheduling is triggered every 10s, and each round processes robot states, updates unfinished path prefixes, resolves newly detected local conflicts, and authorizes executable paths. Fig.~\ref{fig:warehouse-compute-time} shows the edge-side computation-time distribution during operation. The path planning time in each scheduling round remains stable within 700--1800 ms. Path authorization relies on rolling optimization and therefore incurs additional computational overhead, particularly when robot priorities are need to be adjusted in response to external disturbances. Even so, the total edge-side computation remains well below the period between scheduling loops, leaving sufficient time for device-side execution.

\section{Conclusion}

This paper studies scalable coordination for heterogeneous robot fleet in industrial environments, where system efficiency depends not only on collision-free path planning under realistic constraints, but also on reliable, robust asynchronous path execution under external disturbances. The proposed framework treats planning and execution as a coupled online process. It commits bounded paths to enable controllable scheduling through online conflict resolution, and regulates execution relations via an action-precedence dependency hypergraph, on which the execution order of robots can be receding-horizon optimized in response to external disturbances. In this sense, the framework bridges graph-based multi-robot planning and the practical requirements of scheduling dense industrial robot fleet.
Extensive simulations and real-world factory deployment demonstrate the feasibility and efficiency of the proposed method for large-scale heterogeneous robot fleet. Our future work will focus on extending the pipeline from roadmap graph-based coordination to more general multi-robot trajectory planning, so that continuous-time motion planning and execution can be integrated within a more unified representation, thereby broadening the applicability of multi-robot systems.

\bibliographystyle{IEEEtran}
\bibliography{IEEEabrv,Bibliography}

% Generated by IEEEtran.bst, version: 1.14 (2015/08/26)
\begin{thebibliography}{10}
\providecommand{\url}[1]{#1}
\csname url@samestyle\endcsname
\providecommand{\newblock}{\relax}
\providecommand{\bibinfo}[2]{#2}
\providecommand{\BIBentrySTDinterwordspacing}{\spaceskip=0pt\relax}
\providecommand{\BIBentryALTinterwordstretchfactor}{4}
\providecommand{\BIBentryALTinterwordspacing}{\spaceskip=\fontdimen2\font plus
\BIBentryALTinterwordstretchfactor\fontdimen3\font minus \fontdimen4\font\relax}
\providecommand{\BIBforeignlanguage}[2]{{%
\expandafter\ifx\csname l@#1\endcsname\relax
\typeout{** WARNING: IEEEtran.bst: No hyphenation pattern has been}%
\typeout{** loaded for the language `#1'. Using the pattern for}%
\typeout{** the default language instead.}%
\else
\language=\csname l@#1\endcsname
\fi
#2}}
\providecommand{\BIBdecl}{\relax}
\BIBdecl

\bibitem{zone-control}
S.~Reveliotis, ``An mpc scheme for traffic coordination in open and irreversible, zone-controlled, guidepath-based transport systems,'' \emph{IEEE Transactions on Automation Science and Engineering}, vol.~17, no.~3, pp. 1528--1542, 2020.

\bibitem{petri-net}
J.~Luo, Y.~Wan, W.~Wu, and Z.~Li, ``Optimal petri-net controller for avoiding collisions in a class of automated guided vehicle systems,'' \emph{IEEE Transactions on Intelligent Transportation Systems}, vol.~21, no.~11, pp. 4526--4537, 2019.

\bibitem{glued-node}
Z.~Xing, H.~Yue, T.~Zhang, W.~Wu, and R.~Hu, ``Collision avoidance and give way of heterogeneous and variable-sized multiple mobile robots based on glued nodes,'' \emph{IEEE Transactions on Automation Science and Engineering}, vol.~21, no.~3, pp. 3627--3638, 2023.

\bibitem{mapf-lns2}
J.~Li, Z.~Chen, D.~Harabor, P.~J. Stuckey, and S.~Koenig, ``Mapf-lns2: Fast repairing for multi-agent path finding via large neighborhood search,'' in \emph{Proceedings of the AAAI Conference on Artificial Intelligence}, vol.~36, no.~9, 2022, pp. 10\,256--10\,265.

\bibitem{lacam}
K.~Okumura, ``{LaCAM}: Search-based algorithm for quick multi-agent pathfinding,'' in \emph{Proceedings of the AAAI Conference on Artificial Intelligence}, vol.~37, no.~10, 2023, pp. 11\,655--11\,662.

\bibitem{lacam_star}
{K. Okumura}, ``Improving {LaCAM} for scalable eventually optimal multi-agent pathfinding,'' in \emph{Proceedings of the Thirty-Second International Joint Conference on Artificial Intelligence, {IJCAI-23}}, 8 2023, pp. 243--251.

\bibitem{large-agent-li}
J.~Li, P.~Surynek, A.~Felner, H.~Ma, T.~S. Kumar, and S.~Koenig, ``Multi-agent path finding for large agents,'' in \emph{Proceedings of the AAAI Conference on Artificial Intelligence}, vol.~33, no.~01, 2019, pp. 7627--7634.

\bibitem{CCBS-1}
E.~Boyarski, R.~Stern, K.~S. Yakovlev, and A.~Andreychuk, ``Improving {Continuous}-time {Conflict} {Based} {Search}.'' \emph{arXiv: Artificial Intelligence}, Jan. 2021.

\bibitem{k-robut}
Z.~Chen, D.~D. Harabor, J.~Li, and P.~J. Stuckey, ``Symmetry breaking for k-robust multi-agent path finding,'' in \emph{Proceedings of the AAAI Conference on Artificial Intelligence}, vol.~35, no.~14, 2021, pp. 12\,267--12\,274.

\bibitem{ADG}
W.~H{\"o}nig, S.~Kiesel, A.~Tinka, J.~W. Durham, and N.~Ayanian, ``Persistent and robust execution of mapf schedules in warehouses,'' \emph{IEEE Robotics and Automation Letters}, vol.~4, no.~2, pp. 1125--1131, 2019.

\bibitem{SADG}
A.~Berndt, N.~Van~Duijkeren, L.~Palmieri, A.~Kleiner, and T.~Keviczky, ``Receding horizon re-ordering of multi-agent execution schedules,'' \emph{IEEE Transactions on Robotics}, vol.~40, pp. 1356--1372, 2023.

\bibitem{pure-rule-agvs}
Y.~Sun, N.~Zhao, L.~Tang, and L.~Luo, ``Breaking the limit on the number of robots through the conflict-free scheduling in robotic mobile fulfillment systems,'' \emph{IEEE Transactions on Automation Science and Engineering}, vol.~22, pp. 7324--7334, 2024.

\bibitem{CAEH}
A.~Bonetti, S.~Proia, S.~Guidetti, and L.~Sabattini, ``Agv traffic management in automated industrial plants: An enhanced lifelong multi-agent path finding approach,'' in \emph{2024 IEEE 20th International Conference on Automation Science and Engineering (CASE)}.\hskip 1em plus 0.5em minus 0.4em\relax IEEE, 2024, pp. 626--632.

\bibitem{well-formed}
H.~Ma, W.~H{\"o}nig, T.~S. Kumar, N.~Ayanian, and S.~Koenig, ``Lifelong path planning with kinematic constraints for multi-agent pickup and delivery,'' in \emph{Proceedings of the AAAI Conference on Artificial Intelligence}, vol.~33, no.~01, 2019, pp. 7651--7658.

\bibitem{RHCR}
J.~Li, A.~Tinka, S.~Kiesel, J.~W. Durham, T.~S. Kumar, and S.~Koenig, ``Lifelong multi-agent path finding in large-scale warehouses,'' in \emph{Proceedings of the AAAI Conference on Artificial Intelligence}, vol.~35, no.~13, 2021, pp. 11\,272--11\,281.

\bibitem{ECBS}
M.~Barer, G.~Sharon, R.~Stern, and A.~Felner, ``Suboptimal variants of the conflict-based search algorithm for the multi-agent pathfinding problem,'' in \emph{Proceedings of the Seventh Annual Symposium on Combinatorial Search}, 2014, pp. 19--27.

\bibitem{pbs}
H.~Ma, D.~Harabor, P.~J. Stuckey, J.~Li, and S.~Koenig, ``Searching with consistent prioritization for multi-agent path finding,'' in \emph{Proceedings of the AAAI conference on artificial intelligence}, vol.~33, no.~01, 2019, pp. 7643--7650.

\bibitem{CBS}
G.~Sharon, R.~Stern, A.~Felner, and N.~R. Sturtevant, ``Conflict-based search for optimal multi-agent pathfinding,'' \emph{Artificial intelligence}, vol. 219, pp. 40--66, 2015.

\bibitem{pibt}
K.~Okumura, M.~Machida, X.~D{\'e}fago, and Y.~Tamura, ``Priority inheritance with backtracking for iterative multi-agent path finding,'' \emph{Artificial Intelligence}, vol. 310, p. 103752, 2022.

\bibitem{hetero-mapf}
D.~Atzmon, Y.~Zax, E.~Kivity, L.~Avitan, J.~Morag, and A.~Felner, ``Generalizing multi-agent path finding for heterogeneous agents,'' in \emph{Proceedings of the International Symposium on Combinatorial Search}, vol.~11, no.~1, 2020, pp. 101--105.

\bibitem{2022CBS-dynamics}
J.~Kottinger, S.~Almagor, and M.~Lahijanian, ``Conflict-based search for multi-robot motion planning with kinodynamic constraints,'' in \emph{2022 IEEE/RSJ International Conference on Intelligent Robots and Systems (IROS)}.\hskip 1em plus 0.5em minus 0.4em\relax IEEE, 2022, pp. 13\,494--13\,499.

\bibitem{db-lacam}
A.~Moldagalieva, K.~Okumura, A.~Prorok, and W.~H{\"o}nig, ``db-lacam: Fast and scalable multi-robot kinodynamic motion planning with discontinuity-bounded search and lightweight mapf,'' \emph{arXiv preprint arXiv:2512.06796}, 2025.

\bibitem{p-robut}
D.~Atzmon, R.~Stern, A.~Felner, N.~R. Sturtevant, and S.~Koenig, ``Probabilistic robust multi-agent path finding,'' in \emph{Proceedings of the International Conference on Automated Planning and Scheduling}, vol.~30, 2020, pp. 29--37.

\bibitem{STN-ma}
W.~H{\"o}nig, T.~Kumar, L.~Cohen, H.~Ma, H.~Xu, N.~Ayanian, and S.~Koenig, ``Multi-agent path finding with kinematic constraints,'' in \emph{Proceedings of the International Conference on Automated Planning and Scheduling}, vol.~26, 2016, pp. 477--485.

\bibitem{TTP}
K.~Okumura, Y.~Tamura, and X.~D{\'e}fago, ``Time-independent planning for multiple moving agents,'' \emph{Proceedings of the AAAI Conference on Artificial Intelligence}, vol.~35, no.~13, pp. 11\,299--11\,307, 2021.

\bibitem{pie_d}
Y.~Zhang, Z.~Chen, D.~Harabor, P.~Le~Bodic, and P.~J. Stuckey, ``Concurrent planning and execution in lifelong multi-agent path finding with delay probabilities,'' in \emph{Proceedings of the AAAI Conference on Artificial Intelligence}, vol.~39, no.~22, 2025, pp. 23\,387--23\,394.

\bibitem{traffic-flow}
Z.~Chen, D.~Harabor, J.~Li, and P.~J. Stuckey, ``Traffic flow optimisation for lifelong multi-agent path finding,'' in \emph{Proceedings of the AAAI Conference on Artificial Intelligence}, vol.~38, no.~18, 2024, pp. 20\,674--20\,682.

\bibitem{a_star}
P.~E. Hart, N.~J. Nilsson, and B.~Raphael, ``A formal basis for the heuristic determination of minimum cost paths,'' \emph{IEEE Transactions on Systems Science and Cybernetics}, vol.~4, no.~2, pp. 100--107, 1968.

\bibitem{sg}
A.~Coskun, J.~M. O'Kane, and M.~Valtorta, ``Deadlock-free online plan repair in multi-robot coordination with disturbances,'' in \emph{Proceedings of the Thirty-Fourth International Florida Artificial Intelligence Research Society Conference}.\hskip 1em plus 0.5em minus 0.4em\relax Florida Online Journals, 2021.

\end{thebibliography}

\end{document}